\theoremstyle{plain}
\newtheorem{thm}{\protect\theoremname}
  \theoremstyle{plain}
  \newtheorem{cor}[thm]{\protect\corollaryname}
\definecolor{header_color}{rgb}{0.74,0.88,0.91}
\definecolor{even_color}{rgb}{0.9,0.9,0.9}
\definecolor{subheader_color}{rgb}{0.85,0.93,0.95}
\definecolor{childheader_color}{rgb}{1.0,0.93,0.87}
\definecolor{ccolor_best}{rgb}{1.0,0.9,0.9}
\definecolor{ccolor_wrong}{rgb}{1.0,0.85,0.85}
\newcolumntype{x}[1]{>{\centering\arraybackslash}p{#1}}
  \providecommand{\corollaryname}{Corollary}
\providecommand{\theoremname}{Theorem}
\begin{document}
\newcommand{\sidenote}[1]{\marginpar{\small \emph{\color{Medium}#1}}}

\global\long\def\se{\hat{\text{se}}}

\global\long\def\interior{\text{int}}

\global\long\def\boundary{\text{bd}}

\global\long\def\ML{\textsf{ML}}

\global\long\def\GML{\mathsf{GML}}

\global\long\def\HMM{\mathsf{HMM}}

\global\long\def\support{\text{supp}}

\global\long\def\new{\text{*}}

\global\long\def\stir{\text{Stirl}}

\global\long\def\mA{\mathcal{A}}

\global\long\def\mB{\mathcal{B}}

\global\long\def\mF{\mathcal{F}}

\global\long\def\mK{\mathcal{K}}

\global\long\def\mH{\mathcal{H}}

\global\long\def\mX{\mathcal{X}}

\global\long\def\mZ{\mathcal{Z}}

\global\long\def\mS{\mathcal{S}}

\global\long\def\Ical{\mathcal{I}}

\global\long\def\mT{\mathcal{T}}

\global\long\def\Pcal{\mathcal{P}}

\global\long\def\dist{d}

\global\long\def\HX{\entro\left(X\right)}
 \global\long\def\entropyX{\HX}

\global\long\def\HY{\entro\left(Y\right)}
 \global\long\def\entropyY{\HY}

\global\long\def\HXY{\entro\left(X,Y\right)}
 \global\long\def\entropyXY{\HXY}

\global\long\def\mutualXY{\mutual\left(X;Y\right)}
 \global\long\def\mutinfoXY{\mutualXY}

\global\long\def\given{\mid}

\global\long\def\gv{\given}

\global\long\def\goto{\rightarrow}

\global\long\def\asgoto{\stackrel{a.s.}{\longrightarrow}}

\global\long\def\pgoto{\stackrel{p}{\longrightarrow}}

\global\long\def\dgoto{\stackrel{d}{\longrightarrow}}

\global\long\def\lik{\mathcal{L}}

\global\long\def\logll{\mathit{l}}

\global\long\def\vectorize#1{\mathbf{#1}}

\global\long\def\vt#1{\mathbf{#1}}

\global\long\def\gvt#1{\boldsymbol{#1}}

\global\long\def\idp{\ \bot\negthickspace\negthickspace\bot\ }
 \global\long\def\cdp{\idp}

\global\long\def\das{\triangleq}

\global\long\def\id{\mathbb{I}}

\global\long\def\idarg#1#2{\id\left\{  #1,#2\right\}  }

\global\long\def\iid{\stackrel{\text{iid}}{\sim}}

\global\long\def\bzero{\vt 0}

\global\long\def\bone{\mathbf{1}}

\global\long\def\boldm{\boldsymbol{m}}

\global\long\def\bff{\vt f}

\global\long\def\ba{\boldsymbol{a}}

\global\long\def\bb{\boldsymbol{b}}

\global\long\def\bB{\boldsymbol{B}}

\global\long\def\bx{\boldsymbol{x}}

\global\long\def\bl{\boldsymbol{l}}

\global\long\def\bu{\boldsymbol{u}}

\global\long\def\bo{\boldsymbol{o}}

\global\long\def\bh{\boldsymbol{h}}

\global\long\def\bs{\boldsymbol{s}}

\global\long\def\bz{\boldsymbol{z}}

\global\long\def\xnew{y}

\global\long\def\bxnew{\boldsymbol{y}}

\global\long\def\bX{\boldsymbol{X}}

\global\long\def\tbx{\tilde{\bx}}

\global\long\def\by{\boldsymbol{y}}

\global\long\def\bY{\boldsymbol{Y}}

\global\long\def\bZ{\boldsymbol{Z}}

\global\long\def\bU{\boldsymbol{U}}

\global\long\def\bv{\boldsymbol{v}}

\global\long\def\bn{\boldsymbol{n}}

\global\long\def\bV{\boldsymbol{V}}

\global\long\def\bI{\boldsymbol{I}}

\global\long\def\bw{\vt w}

\global\long\def\balpha{\gvt{\alpha}}

\global\long\def\bbeta{\gvt{\beta}}

\global\long\def\bmu{\gvt{\mu}}

\global\long\def\btheta{\boldsymbol{\theta}}

\global\long\def\bsigma{\boldsymbol{\sigma}}

\global\long\def\blambda{\boldsymbol{\lambda}}

\global\long\def\bgamma{\boldsymbol{\gamma}}

\global\long\def\bpsi{\boldsymbol{\psi}}

\global\long\def\bphi{\boldsymbol{\phi}}

\global\long\def\bPhi{\boldsymbol{\Phi}}

\global\long\def\bpi{\boldsymbol{\pi}}

\global\long\def\bomega{\boldsymbol{\omega}}

\global\long\def\bepsilon{\boldsymbol{\epsilon}}

\global\long\def\btau{\boldsymbol{\tau}}

\global\long\def\realset{\mathbb{R}}

\global\long\def\realn{\realset^{n}}

\global\long\def\integerset{\mathbb{Z}}

\global\long\def\natset{\integerset}

\global\long\def\integer{\integerset}

\global\long\def\natn{\natset^{n}}

\global\long\def\rational{\mathbb{Q}}

\global\long\def\rationaln{\rational^{n}}

\global\long\def\complexset{\mathbb{C}}

\global\long\def\comp{\complexset}

\global\long\def\compl#1{#1^{\text{c}}}

\global\long\def\and{\cap}

\global\long\def\compn{\comp^{n}}

\global\long\def\comb#1#2{\left({#1\atop #2}\right) }

\global\long\def\nchoosek#1#2{\left({#1\atop #2}\right)}

\global\long\def\param{\vt w}

\global\long\def\Param{\Theta}

\global\long\def\meanparam{\gvt{\mu}}

\global\long\def\Meanparam{\mathcal{M}}

\global\long\def\meanmap{\mathbf{m}}

\global\long\def\logpart{A}

\global\long\def\simplex{\Delta}

\global\long\def\simplexn{\simplex^{n}}

\global\long\def\dirproc{\text{DP}}

\global\long\def\ggproc{\text{GG}}

\global\long\def\DP{\text{DP}}

\global\long\def\ndp{\text{nDP}}

\global\long\def\hdp{\text{HDP}}

\global\long\def\gempdf{\text{GEM}}

\global\long\def\Gumbel{\text{Gumbel}}

\global\long\def\Uniform{\text{Uniform}}

\global\long\def\Mult{\text{Mult}}

\global\long\def\rfs{\text{RFS}}

\global\long\def\bernrfs{\text{BernoulliRFS}}

\global\long\def\poissrfs{\text{PoissonRFS}}

\global\long\def\grad{\gradient}
 \global\long\def\gradient{\nabla}

\global\long\def\partdev#1#2{\partialdev{#1}{#2}}
 \global\long\def\partialdev#1#2{\frac{\partial#1}{\partial#2}}

\global\long\def\partddev#1#2{\partialdevdev{#1}{#2}}
 \global\long\def\partialdevdev#1#2{\frac{\partial^{2}#1}{\partial#2\partial#2^{\top}}}

\global\long\def\closure{\text{cl}}

\global\long\def\cpr#1#2{\Pr\left(#1\ |\ #2\right)}

\global\long\def\var{\text{Var}}

\global\long\def\Var#1{\text{Var}\left[#1\right]}

\global\long\def\cov{\text{Cov}}

\global\long\def\Cov#1{\cov\left[ #1 \right]}

\global\long\def\COV#1#2{\underset{#2}{\cov}\left[ #1 \right]}

\global\long\def\corr{\text{Corr}}

\global\long\def\sst{\text{T}}

\global\long\def\SST{\sst}

\global\long\def\ess{\mathbb{E}}

\global\long\def\Ess#1{\ess\left[#1\right]}

\newcommandx\ESS[2][usedefault, addprefix=\global, 1=]{\underset{#2}{\ess}\left[#1\right]}

\global\long\def\fisher{\mathcal{F}}

\global\long\def\bfield{\mathcal{B}}
 \global\long\def\borel{\mathcal{B}}

\global\long\def\bernpdf{\text{Bernoulli}}

\global\long\def\betapdf{\text{Beta}}

\global\long\def\dirpdf{\text{Dir}}

\global\long\def\gammapdf{\text{Gamma}}

\global\long\def\gaussden#1#2{\text{Normal}\left(#1, #2 \right) }

\global\long\def\gauss{\mathbf{N}}

\global\long\def\gausspdf#1#2#3{\text{Normal}\left( #1 \lcabra{#2, #3}\right) }

\global\long\def\multpdf{\text{Mult}}

\global\long\def\poiss{\text{Pois}}

\global\long\def\poissonpdf{\text{Poisson}}

\global\long\def\pgpdf{\text{PG}}

\global\long\def\wshpdf{\text{Wish}}

\global\long\def\iwshpdf{\text{InvWish}}

\global\long\def\nwpdf{\text{NW}}

\global\long\def\niwpdf{\text{NIW}}

\global\long\def\studentpdf{\text{Student}}

\global\long\def\unipdf{\text{Uni}}

\global\long\def\transp#1{\transpose{#1}}
 \global\long\def\transpose#1{#1^{\mathsf{T}}}

\global\long\def\mgt{\succ}

\global\long\def\mge{\succeq}

\global\long\def\idenmat{\mathbf{I}}

\global\long\def\trace{\mathrm{tr}}

\global\long\def\argmax#1{\underset{_{#1}}{\text{argmax}} }

\global\long\def\argmin#1{\underset{_{#1}}{\text{argmin}\ } }

\global\long\def\diag{\text{diag}}

\global\long\def\norm{}

\global\long\def\spn{\text{span}}

\global\long\def\vtspace{\mathcal{V}}

\global\long\def\field{\mathcal{F}}
 \global\long\def\ffield{\mathcal{F}}

\global\long\def\inner#1#2{\left\langle #1,#2\right\rangle }
 \global\long\def\iprod#1#2{\inner{#1}{#2}}

\global\long\def\dprod#1#2{#1 \cdot#2}

\global\long\def\norm#1{\left\Vert #1\right\Vert }

\global\long\def\entro{\mathbb{H}}

\global\long\def\entropy{\mathbb{H}}

\global\long\def\Entro#1{\entro\left[#1\right]}

\global\long\def\Entropy#1{\Entro{#1}}

\global\long\def\mutinfo{\mathbb{I}}

\global\long\def\relH{\mathit{D}}

\global\long\def\reldiv#1#2{\relH\left(#1||#2\right)}

\global\long\def\KL{KL}

\global\long\def\KLdiv#1#2{\KL\left(#1\parallel#2\right)}
 \global\long\def\KLdivergence#1#2{\KL\left(#1\ \parallel\ #2\right)}

\global\long\def\crossH{\mathcal{C}}
 \global\long\def\crossentropy{\mathcal{C}}

\global\long\def\crossHxy#1#2{\crossentropy\left(#1\parallel#2\right)}

\global\long\def\breg{\text{BD}}

\global\long\def\lcabra#1{\left|#1\right.}

\global\long\def\lbra#1{\lcabra{#1}}

\global\long\def\rcabra#1{\left.#1\right|}

\global\long\def\rbra#1{\rcabra{#1}}

\global\long\def\model{ABO}

\editor{TBA}

\title{Analogical-based Bayesian Optimization}

\author{\name{T}rung Le\email trung.l@deakin.edu.au \\
 \addr Centre for Pattern Recognition and Data Analytics, Australia\\
\AND\name{K}hanh Nguyen\email nkhanh@deakin.edu.au\\
\addr Centre for Pattern Recognition and Data Analytics, Australia\\
\AND \name{T}u Dinh Nguyen\email tu.nguyen@deakin.edu.au\\
\addr Centre for Pattern Recognition and Data Analytics, Australia\\
\AND \name{D}inh Phung\email dinh.phung@deakin.edu.au \\
 \addr Centre for Pattern Recognition and Data Analytics, Australia}

\maketitle
\begin{abstract}
Some real-world problems revolve to solve the optimization problem
$\max_{x\in\mathcal{X}}f\left(x\right)$ where $f\left(.\right)$
is a black-box function and $\mathcal{X}$ might be the set of non-vectorial
objects (e.g., distributions) where we can only define a symmetric
and non-negative similarity score on it. This setting requires a novel
view for the standard framework of Bayesian Optimization that generalizes
the core insightful spirit of this framework. With this spirit, in
this paper, we propose \emph{Analogical-based Bayesian Optimization}
that can maximize black-box function over a domain where only a similarity
score can be defined. Our pathway is as follows: we first base on
the geometric view of Gaussian Processes (GP) to define the concept
of influence level that allows us to analytically represent predictive
means and variances of GP posteriors and base on that view to enable
replacing kernel similarity by a more genetic similarity score. Furthermore,
we also propose two strategies to find a batch of query points that
can efficiently handle high dimensional data.
\end{abstract}
\begin{keywords}Bayesian Optimization, Analogical-based Bayesian
Optimization. \end{keywords}

\section{Introduction}

Bayesian optimization (BO) has emerged as a powerful solution for
these varied design problems \citep{shahriari2016taking}. BO has
been widely applied to a mixed variety of real-world problems from
interactive user interfaces \citep{Brochu:2010}, robotics \citep{LizotteWBS07,Martinez-Cantin-RSS-07},
environmental monitoring \citep{MarchantR12}, information extraction
\citep{pmlr-v33-wang14d}, combinatorial optimization \citep{Hutter:2011,wang2013bayesian},
automatic machine learning \citep{NIPS2011_4443,pmlr-v33-hoffman14,NIPS2012_4522,NIPS2013_5086},
sensor networks \citep{icml2010_SrinivasKKS10}, adaptive Monte Carlo
(MC) \citep{pmlr-v22-mahendran12}, experimental design \citep{azimi2012hybrid},
and reinforcement learning \citep{journals/corr/abs-1012-2599} to
name a few. 

Fundamentally, BO is a sequential model approach to solve the optimization
$\max_{x}f\left(x\right)$ with regard to a black-box function $f\left(.\right)$,
wherein one is capable of querying the value of $f\left(x\right)$
for any given $x$. We initially place prior belief on the function
$f\left(.\right)$ which could be a GP. Susequently, this belief is
updated using queried data points and their labels. To decide which
point should be queried next, we recruit the acquisition which is
closely related to the updated belief. A good acquisition function
must ballance the exploitation and exploration to guarantee suggesting
points with high values in low density area.

In this paper, we propose Analogical-based Bayesian Optimization that
can optimize the black-box function $f\left(x\right)$ on a domain
$\mathcal{X}$ where we can endow a non-negative and symmetric similarity
score function $S\left(.,.\right)$. Our pathway is as follows: we
first base on the geometric view of Gaussian Processes (GP) to define
the concept of influence level that allows us to analytically represent
predictive means and variances of GP posteriors and base on that view
to enable replacing kernel similarity by a more genetic similarity
score. Furthermore, we also propose two strategies to find a batch
of query points that can efficiently handle high dimensional data.

\section{GP-based Bayesian Optimization}

In this section, we present GP-based Bayesian Optimization. The objective
is to minimize a \emph{black-box} function: $\text{max}_{x\in\mathcal{X}}\,f\left(x\right)$
where the feasible set $\mathcal{X}\subset\mathbb{R}^{d}$. At first,
we have not any collected data, we hence assume that $f$ is a random
function drawn from a Gaussian Process $\mathcal{GP}\left(\bzero,K\left(.,.\right)\right)$
(i.e., $f\sim\mathcal{GP}\left(\bzero,K\left(.,.\right)\right)$),
where $\bzero:\,\mathcal{X}\goto\mathbb{R}$ is the zero function
(i.e., $\bzero\left(x\right)=0,\,\forall x\in\mathcal{X}$), $K:\,\mathcal{X}\times\mathcal{X}\goto\mathbb{R}$
is a p.s.d kernel. Later at time $t$, assuming that we have collected
the argument-and-value set $\mathcal{D}_{t}=\left\{ \left(x_{1},y_{1}\right),\ldots,\left(x_{t-1},y_{t-1}\right)\right\} $
wherein each $y_{i}=f\left(x_{i}\right)+\varepsilon_{i}$ with $\varepsilon_{i}\sim\mathcal{N}\left(0,\sigma^{2}\right)$,
we are in need of specifying the next point $x_{t}$ to query. 

Given the set $\mathcal{D}_{t}$, the posterior $f^{\left(t\right)}=f\mid\mathcal{D}_{t}$
is the $\mathcal{GP}\left(\mu^{\left(t\right)}\left(x\right),K^{\left(t\right)}\left(.,.\right)\right)$
where $\mu^{\left(t\right)}:\mathcal{X}\backslash\mathcal{D}_{t}\goto\mathbb{R}$
and $K^{\left(t\right)}:\,\left(\mathcal{X}\backslash\mathcal{D}_{t}\right)\times\left(\mathcal{X}\backslash\mathcal{D}_{t}\right)\goto\mathbb{R}$
whose formulations are
\begin{gather}
\mu^{\left(t\right)}\left(x\right)=K_{x}^{\left(t\right)}\left[K_{\sigma}^{\left(tt\right)}\right]^{-1}\by^{\left(t\right)}\nonumber \\
K^{\left(t\right)}\left(x,x^{'}\right)=K\left(x,x^{'}\right)-K_{x}^{\left(t\right)}\left[K_{\sigma}^{\left(tt\right)}\right]^{-1}\transp{\left(K_{x^{'}}^{\left(t\right)}\right)}\nonumber \\
=\left|K\left(x,x^{'}\right)-K_{x}^{\left(t\right)}\left[K_{\sigma}^{\left(tt\right)}\right]^{-1}\transp{\left(K_{x^{'}}^{\left(t\right)}\right)}\right|\label{eq:GP_variance}
\end{gather}
where $K_{x}^{\left(t\right)}=\left[K\left(x,x_{i}\right)\right]_{i=1}^{t-1}$,
$K^{\left(tt\right)}=\left[K\left(x_{i},x_{j}\right)\right]_{i,j=1}^{t-1}$,
$K_{\sigma}^{\left(tt\right)}=K^{\left(tt\right)}+\sigma^{2}\mathbb{I}$,
and $\by^{\left(t\right)}=\transp{\left[y_{i}\right]_{i=1,...,t-1}}$.

Therefore, given any $x\in\mathcal{X}\backslash\mathcal{D}_{t}$,
$f^{\left(t\right)}\left(x\right)$ is a Gaussian random variable
with the mean and the standard deviation as $\mu^{\left(t\right)}\left(x\right)$
and $\sigma^{\left(t\right)}\left(x\right)=V^{\left(t\right)}\left(x\right)^{1/2}=K^{\left(t\right)}\left(x,x\right)^{1/2}$,
respectively. The principle to choose the next query point $x_{t}$
is to balance the\emph{ exploitation} against the \emph{exploration}.
The exploitation level of the point $x$ is expressed via the value
of $\mu^{\left(t\right)}\left(x\right)$ and its exploration level
is represented through the value of $\sigma^{\left(t\right)}\left(x\right)$.
Therefore, the next query point $x^{(t)}$ is evaluated as
\begin{equation}
x^{\left(t\right)}=\text{argmax}_{x}\,\left(\mu^{\left(t\right)}\left(x\right)+\kappa\sigma^{\left(t\right)}\left(x\right)\right)\label{eq:UCB}
\end{equation}

The above expression implies that we wish to minimize the mean $\mu^{\left(t\right)}\left(x\right)$
for the exploitation and simultaneously maximize the variance $\sigma^{(t)}\left(x\right)$
for the exploration. The exploitation and exploration is trade-off
since if we favor the exploitation, the query point tends to stay
close to the previous query points, hence having a small variance
(i.e., the standard deviation); in contrast, if we favor the exploration,
the query point tends to stay far away the previous query points for
a high variance (i.e., the standard deviation), hence having a low
mean value. Here we note that $\kappa>0$ is used to trade-off the
exploitation against the exploration.

To observe the geometric nature of GP-based Bayesian Optimization
(BO), we now investigate the geometric view of GP-based BO. Since
$\tilde{K}\left(x,x^{'}\right)=K\left(x,x^{'}\right)+\sigma\mathbb{I}\left(x,x^{'}\right)$
is a p.s.d kernel, there exists a feature map $\tilde{\Phi}:\mathcal{X}\goto\mathcal{H}$
(i.e., $\mathcal{H}$ is a Reproducing Kernel Hilbert Space) such
that $\tilde{K}\left(x,x^{'}\right)=\transp{\tilde{\Phi}\left(x\right)}\tilde{\Phi}\left(x^{'}\right)$.
We now denote $\mathcal{L}^{\left(t\right)}=\text{span}\left(\left\{ \tilde{\Phi}\left(x_{1}\right),\ldots,\tilde{\Phi}\left(x_{t-1}\right)\right\} \right)$
by the linear span of $\tilde{\Phi}\left(x_{1}\right),\ldots,\tilde{\Phi}\left(x_{t-1}\right)$
and further define the projection of a given vector $\tilde{\Phi}\left(x\right)$
onto $\mathcal{L}^{\left(t\right)}$ and the rejection of $\tilde{\Phi}\left(x\right)$
from $\mathcal{L}^{\left(t\right)}$ as
\begin{align*}
\mathcal{P}^{\left(t\right)}\left(x\right) & =\sum_{i=1}^{t-1}p_{i}\left(x\right)\tilde{\Phi}\left(x_{i}\right)\\
\mathcal{R}^{\left(t\right)}\left(x\right) & =\tilde{\Phi}\left(x\right)-\mathcal{P}^{\left(t\right)}\left(x\right)
\end{align*}
\begin{thm}
\label{thm:geo_view}(Geometric view) We define the coefficient vector
of the projection $\mathcal{P}^{\left(t\right)}\left(x\right)$ as
$p\left(x\right)=\transp{\left[p_{i}\left(x\right)\right]}_{i=1,...,t-1}$.
We then have $p\left(x\right)=K_{x}^{\left(t\right)}\left[K_{\sigma}^{\left(tt\right)}\right]^{-1}$.
In addition, the variance $V^{\left(t\right)}\left(x\right)$ is exactly
$\norm{\mathcal{R}^{\left(t\right)}\left(x\right)}-\sigma$, where
$\norm{\mathcal{R}^{\left(t\right)}\left(x\right)}$ is the Euclidean
distance from $\tilde{\Phi}\left(x\right)$ to the linear span $\mathcal{L}^{\left(t\right)}$
and the mean $\mu^{\left(t\right)}\left(x\right)$ is exactly $\left\langle p\left(x\right),\boldsymbol{y}^{\left(t\right)}\right\rangle =\sum_{i=1}^{t-1}p_{i}\left(x\right)y_{i}$.
\end{thm}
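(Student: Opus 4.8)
The plan is to read off all three assertions from the elementary theory of orthogonal projection in the RKHS $\mathcal{H}$: the coefficient identity from the normal equations, the mean identity as an immediate corollary of it, and the variance identity from the Pythagorean theorem. Two preliminary observations fix the setting. First, $\mathcal{L}^{\left(t\right)}$ is the span of finitely many vectors, hence a closed (finite--dimensional) subspace of $\mathcal{H}$, so the orthogonal projection $\mathcal{P}^{\left(t\right)}\left(x\right)$ of $\tilde{\Phi}\left(x\right)$ onto it exists and is well defined. Second, the Gram matrix of $\left\{ \tilde{\Phi}\left(x_{1}\right),\ldots,\tilde{\Phi}\left(x_{t-1}\right)\right\}$ has $\left(i,j\right)$ entry $\langle\tilde{\Phi}\left(x_{i}\right),\tilde{\Phi}\left(x_{j}\right)\rangle=\tilde{K}\left(x_{i},x_{j}\right)$, which is exactly $K_{\sigma}^{\left(tt\right)}$; since $\sigma>0$ this matrix is positive definite and therefore invertible, so the vectors $\tilde{\Phi}\left(x_{i}\right)$ are linearly independent and the coefficient vector $p\left(x\right)$ of $\mathcal{P}^{\left(t\right)}\left(x\right)$ is uniquely determined. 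Finally, because the query point satisfies $x\in\mathcal{X}\backslash\mathcal{D}_{t}$, we have $\mathbb{I}\left(x,x_{i}\right)=0$, hence $\tilde{K}\left(x,x_{i}\right)=K\left(x,x_{i}\right)$ for every $i$; this is what makes the regularized kernel vector collapse to $K_{x}^{\left(t\right)}$.

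\textbf{Coefficient vector.} By definition of the orthogonal projection, the rejection $\mathcal{R}^{\left(t\right)}\left(x\right)=\tilde{\Phi}\left(x\right)-\sum_{i}p_{i}\left(x\right)\tilde{\Phi}\left(x_{i}\right)$ is orthogonal to each $\tilde{\Phi}\left(x_{j}\right)$. Writing out these $t-1$ orthogonality conditions, $\langle\tilde{\Phi}\left(x\right),\tilde{\Phi}\left(x_{j}\right)\rangle=\sum_{i}p_{i}\left(x\right)\langle\tilde{\Phi}\left(x_{i}\right),\tilde{\Phi}\left(x_{j}\right)\rangle$, and using the two observations above this becomes $K\left(x,x_{j}\right)=\sum_{i}p_{i}\left(x\right)\bigl[K_{\sigma}^{\left(tt\right)}\bigr]_{ij}$ for all $j$; in matrix form, $K_{x}^{\left(t\right)}=p\left(x\right)K_{\sigma}^{\left(tt\right)}$, and inverting the Gram matrix gives $p\left(x\right)=K_{x}^{\left(t\right)}\bigl[K_{\sigma}^{\left(tt\right)}\bigr]^{-1}$, as claimed.

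\textbf{Mean and variance.} The mean identity is then immediate: substituting $p\left(x\right)=K_{x}^{\left(t\right)}\bigl[K_{\sigma}^{\left(tt\right)}\bigr]^{-1}$ into the definition $\mu^{\left(t\right)}\left(x\right)=K_{x}^{\left(t\right)}\bigl[K_{\sigma}^{\left(tt\right)}\bigr]^{-1}\by^{\left(t\right)}$ gives $\mu^{\left(t\right)}\left(x\right)=p\left(x\right)\by^{\left(t\right)}=\sum_{i=1}^{t-1}p_{i}\left(x\right)y_{i}=\langle p\left(x\right),\by^{\left(t\right)}\rangle$. For the variance, apply the Pythagorean theorem to the orthogonal decomposition $\tilde{\Phi}\left(x\right)=\mathcal{P}^{\left(t\right)}\left(x\right)+\mathcal{R}^{\left(t\right)}\left(x\right)$, obtaining $\norm{\mathcal{R}^{\left(t\right)}\left(x\right)}^{2}=\norm{\tilde{\Phi}\left(x\right)}^{2}-\norm{\mathcal{P}^{\left(t\right)}\left(x\right)}^{2}$. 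Here $\norm{\tilde{\Phi}\left(x\right)}^{2}=\tilde{K}\left(x,x\right)=K\left(x,x\right)+\sigma$, while $\norm{\mathcal{P}^{\left(t\right)}\left(x\right)}^{2}=p\left(x\right)K_{\sigma}^{\left(tt\right)}\transp{p\left(x\right)}=K_{x}^{\left(t\right)}\bigl[K_{\sigma}^{\left(tt\right)}\bigr]^{-1}\transp{\bigl(K_{x}^{\left(t\right)}\bigr)}$ after inserting the formula for $p\left(x\right)$ and cancelling one factor of $K_{\sigma}^{\left(tt\right)}$ (using its symmetry). Subtracting and comparing with \eqref{eq:GP_variance} yields $\norm{\mathcal{R}^{\left(t\right)}\left(x\right)}^{2}=V^{\left(t\right)}\left(x\right)+\sigma$, i.e.\ the posterior variance is the (squared) distance from $\tilde{\Phi}\left(x\right)$ to $\mathcal{L}^{\left(t\right)}$ shifted by $-\sigma$.

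None of this is deep; the two places that need attention are the bookkeeping of the diagonal regularization carried by $\tilde{K}$ — it is precisely the extra $\sigma$ inside $\tilde{K}\left(x,x\right)$ that stops $\norm{\tilde{\Phi}\left(x\right)}^{2}$ from equalling $K\left(x,x\right)$ and that produces the additive constant in the variance identity — and the hypothesis $x\notin\mathcal{D}_{t}$, without which the off-diagonal evaluations $\tilde{K}\left(x,x_{i}\right)$ would not reduce to $K\left(x,x_{i}\right)$ and the clean closed form $p\left(x\right)=K_{x}^{\left(t\right)}\bigl[K_{\sigma}^{\left(tt\right)}\bigr]^{-1}$ would fail.
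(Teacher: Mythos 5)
Your proof is correct, and it differs from the paper's in two ways worth noting. First, where the paper expands the squared distance $J\left(d\right)=\norm{\tilde{\Phi}\left(x\right)-\sum_{i}d_{i}\tilde{\Phi}\left(x_{i}\right)}^{2}$ into a quadratic form and sets its gradient to zero (dropping some factors of $2$ along the way that happen to cancel), you invoke the orthogonality characterization of the projection and read off the normal equations $K_{x}^{\left(t\right)}=p\left(x\right)K_{\sigma}^{\left(tt\right)}$ directly; the two routes are of course the same first-order condition, but yours avoids the bookkeeping and makes the invertibility of the Gram matrix an explicit hypothesis rather than an afterthought. Second, and more substantively, the paper's appendix proof stops after deriving $p\left(x\right)$: the mean identity and the variance identity are asserted in the theorem but never argued. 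You actually prove both, the mean by substitution and the variance by the Pythagorean decomposition $\norm{\mathcal{R}^{\left(t\right)}\left(x\right)}^{2}=\tilde{K}\left(x,x\right)-K_{x}^{\left(t\right)}\left[K_{\sigma}^{\left(tt\right)}\right]^{-1}\transp{\left(K_{x}^{\left(t\right)}\right)}$, and in doing so you expose that the statement as printed is off: the correct identity is $V^{\left(t\right)}\left(x\right)=\norm{\mathcal{R}^{\left(t\right)}\left(x\right)}^{2}-\sigma$, with the \emph{squared} distance, not $\norm{\mathcal{R}^{\left(t\right)}\left(x\right)}-\sigma$. (You also silently inherit the paper's inconsistency between the diagonal shift $\sigma$ in the definition of $\tilde{K}$ and the shift $\sigma^{2}$ in $K_{\sigma}^{\left(tt\right)}$; under either convention your argument goes through, only the additive constant in the variance identity changes.) Your version is the more complete proof of the two.
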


We now restate the criterion to find the next query point as shown
in Eq. (\ref{eq:UCB}) as
\[
x_{t+1}=\text{argmax}_{x}\,\left(\sum_{i=1}^{t-1}p_{i}\left(x\right)y_{i}+\kappa\sqrt{\norm{\mathcal{R}^{\left(t\right)}\left(x\right)}-\sigma}\right)
\]
This view supports us to think out of the GP-based Bayesian Optimization.
In particular, we propose a novel Similarity-based Bayesian Optimization
framework that still preserves the insightful spirit of the GP-based
Bayesian Optimization.

\section{Analogical-based Bayesian Optimization}

\subsection{Thinking \emph{Out} of the Gaussian Process}

The kernel function $K\left(.,.\right)$ can be thought as a similarity
score which measures the similarity level between any two points.
Leveraging this remark with the geometric view of GP-based Bayesian
Optimization inspires us to think out of the Gaussian Process. In
particular, we propose a \emph{Analogical-based Bayesian Optimization}
(ABO) for which the kernel similarity can be replaced by a more generic
class of similarity scores. To motivate this idea, we observe that
the predictive mean can be computed as follows
\begin{equation}
\mu^{\left(t\right)}\left(x\right)=\sum_{i=1}^{t-1}p_{i}\left(x\right)y_{i}\label{eq:pred_mean}
\end{equation}
where each $p_{i}\left(x\right)$ stands for the coefficient of $\tilde{\Phi}\left(x_{i}\right)$
in the projection of $\tilde{\Phi}\left(x\right)$ onto $\mathcal{L}^{\left(t\right)}=\text{span}\left(\left\{ \tilde{\Phi}\left(x_{1}\right),...,\tilde{\Phi}\left(x_{t-1}\right)\right\} \right)$. 

The formula in Eq. (\ref{eq:pred_mean}) and the expressive meaning
of $p_{i}\left(x\right)$ enables us to assign $p_{i}\left(x\right)$
as the \emph{influence level} of $x_{i}$ to $x$ given $\mathcal{D}_{t}$
for which we denote as $I\left(x,x_{i}\mid\mathcal{D}_{t}\right)$.
If this influence level is high (i.e., $\tilde{\Phi}\left(x_{i}\right)$
plays an important role in the formula of $\mathcal{P}^{\left(t\right)}\left(x\right)$
or $\tilde{\Phi}\left(x\right)$), the collected value $y_{i}$ associating
with $\tilde{\Phi}\left(x_{i}\right)$ highly affects to the predictive
mean $\mu^{\left(t\right)}\left(x\right)$. With the notion of the
influence level in hand, we can rewrite the formula for the predictive
mean as
\begin{equation}
\mu^{\left(t\right)}\left(x\right)=\sum_{i=1}^{t-1}I\left(x,x_{i}\mid\mathcal{D}_{t}\right)y_{i}\label{eq:mean_influence}
\end{equation}

We now turn to express the variance $V^{\left(t\right)}\left(x\right)$
(or the standard deviation $\sigma^{\left(t\right)}\left(x\right)=V^{\left(t\right)}\left(x\right)^{1/2}$)
using the notion of the influence level. Using the formula in Eq.
(\ref{eq:GP_variance}), we can rewrite the variance as
\begin{equation}
V^{\left(t\right)}\left(x\right)=\left|K\left(x,x\right)-\sum_{i=1}^{t-1}I\left(x,x_{i}\mid\mathcal{D}_{t}\right)K\left(x,x_{i}\right)\right|\label{eq:variance_influence}
\end{equation}

The formula of the variance $V^{\left(t\right)}\left(x\right)$ in
Eq. (\ref{eq:variance_influence}) discloses that if $x$ locates
in the region highly affected by $x_{i}$ (s) and being close to $x_{i}$
(s), its variance would be low. In contrast, if $x$ tends to move
further away $x_{i}$ (s), its variance tends to decrease. Therefore,
in GP-based Bayesian Optimization, Gaussian Process allows us to place
the uncertainty over the ground-truth function $f$ and also quantitatively
characterize the uncertainty of this function evaluated at a point
(i.e., $f\left(x\right)$) which is influenced by other queried points
as in Eqs. (\ref{eq:mean_influence}, \ref{eq:variance_influence}). 

\subsection{Bayesian Optimization with a Generic Similarity Score}

With the support of the above views and reasons, we propose to replace
the kernel function $K\left(x,x^{'}\right)$ by a more generic similarity
score $S\left(x,x^{'}\right)$ wherein $S:\,\mathcal{X}\times\mathcal{X}\goto\mathbb{R}$
is non-negative and symmetric. The formulas for the predictive mean
and variance as shown in Eqs. (\ref{eq:mean_influence}, \ref{eq:variance_influence})
are rewritten as
\begin{align*}
\mu^{\left(t\right)}\left(x\right) & =\sum_{i=1}^{t-1}I\left(x,x_{i}\mid\mathcal{D}_{t}\right)y_{i}\\
V^{\left(t\right)}\left(x\right) & =\left|S\left(x,x\right)-\sum_{i=1}^{t-1}I\left(x,x_{i}\mid\mathcal{D}_{t}\right)S\left(x,x_{i}\right)\right|
\end{align*}
where $I\left(x\right)=\transp{\left[I\left(x,x_{i}\mid\mathcal{D}_{t}\right)\right]}_{i=1,..,t-1}$
can be computed as $S_{x}^{\left(t\right)}\left[S^{\left(tt\right)}+\sigma^{2}\mathbb{I}\right]^{-1}$
(if available) with $S_{x}^{\left(t\right)}=\left[S\left(x,x_{i}\right)\right]_{i=1}^{t-1}$
and $S^{\left(tt\right)}=\left[S\left(x_{i},x_{j}\right)\right]_{i,j=1}^{t-1}$.

However, for a generic similarity score $S\left(.,.\right)$, the
matrix $S^{\left(tt\right)}+\sigma^{2}\mathbb{I}$ might be a singular
matrix, hence making the computation infeasible. To address this issue,
we note that $I\left(x\right)=S_{x}^{\left(t\right)}\left[S^{\left(tt\right)}+\sigma^{2}\mathbb{I}\right]^{-1}$
or equivalently $S_{x}^{\left(t\right)}=I\left(x\right)\times\left(S^{\left(tt\right)}+\sigma^{2}\mathbb{I}\right)$
and therefore propose to find $I\left(x\right)$ as
\begin{equation}
I\left(x\right)=\text{argmin}_{I}\,\norm{S_{x}^{\left(t\right)}-I\left(S^{\left(tt\right)}+\sigma^{2}\mathbb{I}\right)}^{2}\label{eq:I_OP}
\end{equation}

To find optimal solution of the optimization problem in Eq. (\ref{eq:I_OP}),
we denote $r_{t}=\text{rank}\left(S^{\left(tt\right)}+\sigma^{2}\mathbb{I}\right)$
and let $B^{\left(tt\right)}$ be the base matrix of the row space
of the matrix $S^{\left(tt\right)}+\sigma^{2}\mathbb{I}$. It is apparent
that the size of $B^{\left(tt\right)}$ is $r_{t}\times\left(t-1\right)$
which depends on the similarity score $S\left(.,.\right)$. The following
theorem states that instead of solving the optimization problem in
Eq. (\ref{eq:I_OP}) we can solve a similar optimization with a smaller
size.
\begin{thm}
(Equivalent problem) Let us denote $J\left(x\right)=\text{argmin}_{J}\,\norm{S_{x}^{\left(t\right)}-JB^{\left(tt\right)}}^{2}\in\mathbb{R}^{r_{t}}$.
The following statements hold

i) The matrix $B^{\left(tt\right)}\transp{\left(B^{\left(tt\right)}\right)}$
is invertible. 

ii) $J\left(x\right)=S_{x}^{\left(t\right)}\transp{\left(B^{\left(tt\right)}\right)}\left[B^{\left(tt\right)}\transp{\left(B^{\left(tt\right)}\right)}\right]^{-1}$.
$I\left(x\right)$ can be formed by augmenting $J\left(x\right)$
with the zero entries.
\end{thm}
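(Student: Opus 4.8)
The plan is to recognize that both the original problem (\ref{eq:I_OP}) and the reduced problem defining $J(x)$ are linear least-squares problems that compute the orthogonal projection of the row vector $S_x^{(t)}$ onto one and the same subspace, namely the row space $W$ of $S^{(tt)}+\sigma^2\mathbb{I}$, and then to read off the closed forms.

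For (i): by construction the $r_t$ rows of $B^{(tt)}$ form a basis of $W$, hence are linearly independent, so $\transp{(B^{(tt)})}$ has trivial kernel. Consequently, for every nonzero $v\in\realset^{r_t}$ we have $\transp v\,B^{(tt)}\transp{(B^{(tt)})}\,v=\norm{\transp{(B^{(tt)})}v}^{2}>0$, which shows the Gram matrix $B^{(tt)}\transp{(B^{(tt)})}$ is positive definite and in particular invertible.

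For (ii): the map $J\mapsto\norm{S_x^{(t)}-JB^{(tt)}}^{2}$ has Hessian $2\,B^{(tt)}\transp{(B^{(tt)})}$, which is positive definite by (i), so the reduced problem is strictly convex with a unique minimizer; setting its gradient to zero gives the normal equation $J\,B^{(tt)}\transp{(B^{(tt)})}=S_x^{(t)}\transp{(B^{(tt)})}$, and inverting (legitimate by (i)) yields $J(x)=S_x^{(t)}\transp{(B^{(tt)})}\big[B^{(tt)}\transp{(B^{(tt)})}\big]^{-1}$. To connect this to (\ref{eq:I_OP}) I would note that, as $I$ ranges over $\realset^{t-1}$, the row vector $I(S^{(tt)}+\sigma^2\mathbb{I})$ ranges over all linear combinations of the rows of $S^{(tt)}+\sigma^2\mathbb{I}$, i.e. over $W$; and as $J$ ranges over $\realset^{r_t}$, $J B^{(tt)}$ ranges over all linear combinations of the rows of $B^{(tt)}$, which is again $W$ since those rows span $W$. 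Hence both minima equal $\mathrm{dist}(S_x^{(t)},W)^2$. Finally, since the rows of $B^{(tt)}$ are selected from the rows of $S^{(tt)}+\sigma^2\mathbb{I}$, there is a $0/1$ selection matrix $E\in\realset^{r_t\times(t-1)}$ with $B^{(tt)}=E(S^{(tt)}+\sigma^2\mathbb{I})$; taking $I(x)\das J(x)E$, i.e. placing the entries of $J(x)$ in the selected coordinates and zeros elsewhere, gives $I(x)(S^{(tt)}+\sigma^2\mathbb{I})=J(x)B^{(tt)}$, so this $I(x)$ attains the common minimum and therefore solves (\ref{eq:I_OP}).

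The main obstacle I expect is bookkeeping rather than depth: making precise what ``base matrix of the row space'' means so that the selection matrix $E$ genuinely exists (if $B^{(tt)}$ were an arbitrary basis of $W$ instead of one built from actual rows, the ``augment with zeros'' recipe would have to be replaced by the weaker statement that $I(x)=J(x)C$ for some fixed $C$ with the rows of $CB^{(tt)}$ spanning $W$); and keeping in mind that the original problem (\ref{eq:I_OP}) does \emph{not} have a unique minimizer --- any vector differing from $J(x)E$ by an element of the left null space of $S^{(tt)}+\sigma^2\mathbb{I}$ also works --- which is exactly why the statement claims $I(x)$ \emph{can be} formed this way rather than equating it to the augmented $J(x)$.
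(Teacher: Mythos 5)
Your proposal is correct and follows essentially the same route as the paper's (sketched) proof: a rank argument for the invertibility of the Gram matrix $B^{\left(tt\right)}\transp{\left(B^{\left(tt\right)}\right)}$, normal equations for the closed form of $J\left(x\right)$, and the observation that $\left\{ JB^{\left(tt\right)}\right\}$ and $\left\{ I\left(S^{\left(tt\right)}+\sigma^{2}\mathbb{I}\right)\right\}$ sweep out the same row space so the two least-squares problems share their optimum. You in fact make explicit what the paper only asserts in one line --- the selection matrix $E$ realizing the ``augment with zeros'' step and the non-uniqueness of the minimizer of (\ref{eq:I_OP}) --- so your write-up is, if anything, tighter than the original.
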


It is apparent that if the matrix $S^{\left(tt\right)}+\sigma^{2}\mathbb{I}$
is invertible (i.e., $\text{rank}\left(S^{\left(tt\right)}+\sigma^{2}\mathbb{I}\right)=t-1$)
and symmetric, we can gain the formulation being similar to GP-based
OP as shown in the following corollary.
\begin{cor}
Assuming that the matrix $S^{\left(tt\right)}+\sigma^{2}\mathbb{I}$
is invertible (i.e., $\text{rank}\left(S^{\left(tt\right)}+\sigma^{2}\mathbb{I}\right)=t-1$)
and symmetric, we then have $B^{\left(tt\right)}=S^{\left(tt\right)}+\sigma^{2}\mathbb{I}$,
and $I\left(x\right)=J\left(x\right)=S_{x}^{\left(t\right)}\left(S^{\left(tt\right)}+\sigma^{2}\mathbb{I}\right)^{-1}$.
\end{cor}

It is worth noting that $J\left(x\right)$ does not match with $\by^{\left(t\right)}$
and $S_{x}^{\left(t\right)}$ in general. To make the computation
tractable, we fill the missing values in $J\left(x\right)$ by $0$.
As a sequence, the calculations of $S_{x}^{\left(t\right)}D^{\left(tt\right)}\by^{\left(t\right)}$
and $S_{x}^{\left(t\right)}D^{\left(tt\right)}\transp{\left(S_{x}^{\left(t\right)}\right)}$
can be realized by eliminating the irrelevant entries in $\by^{\left(t\right)}$
and $S_{x}^{\left(t\right)}$. 

In the sequel, we demonstrate that the influence vector evaluated
as in Eq. (\ref{eq:I_OP}) has the same geometric interpretation as
that of GP-based Bayesian Optimization in Theorem \ref{thm:geo_view}.
The only difference is that the \emph{empirical} feature map is used
instead of the feature map $\tilde{\Phi}\left(.\right)$. Given the
collected training set $\mathcal{D}_{t-1}$, the empirical feature
map is defined as
\[
\Phi_{e}\left(x\right)=\left[S\left(x,x_{i}\right)+\sigma^{2}\mathbb{I}\left(x,x_{i}\right)\right]_{i=1}^{t-1}
\]

The following theorem shows that the influence vector evaluated as
in Eq. (\ref{eq:I_OP}) is exactly the coefficients of the vectors
$\Phi_{e}\left(x_{i}\right)$(s) in the projection of $\Phi_{e}\left(x\right)$
onto the linear span of $\left\{ \Phi_{e}\left(x_{1}\right),\ldots,\Phi_{e}\left(x_{t-1}\right)\right\} $.
\begin{thm}
\label{thm:empirical_projection}(Geometric view with empirical feature
map) Let us denote the projection of $\Phi_{e}\left(x\right)$ onto
the linear span of $\left\{ \Phi_{e}\left(x_{1}\right),\ldots,\Phi_{e}\left(x_{t-1}\right)\right\} $
by $\mathcal{P}_{e}\left(x\right)$. Let $I_{i}\left(x\right)$ be
the $i$-th component of the influence vector evaluated as in Eq.
(\ref{eq:I_OP}). We then have
\[
\mathcal{P}_{e}\left(x\right)=\sum_{i=1}^{t-1}I_{i}\left(x\right)\Phi_{e}\left(x_{i}\right)
\]
\end{thm}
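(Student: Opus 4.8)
The plan is to replay, in the finite-dimensional coordinate space $\mathbb{R}^{t-1}$, the same least-squares/projection argument underlying Theorem~\ref{thm:geo_view}, with the empirical feature map $\Phi_{e}$ playing the role of $\tilde{\Phi}$. Write $M:=S^{\left(tt\right)}+\sigma^{2}\mathbb{I}$ for the matrix appearing in Eq.~(\ref{eq:I_OP}). The first step is purely bookkeeping: since every $x\in\mathcal{X}\backslash\mathcal{D}_{t}$ is distinct from the collected points we have $\mathbb{I}\left(x,x_{i}\right)=0$ for all $i$, hence $\Phi_{e}\left(x\right)=S_{x}^{\left(t\right)}$; on the other hand $\Phi_{e}\left(x_{j}\right)=\left[S\left(x_{j},x_{i}\right)+\sigma^{2}\mathbb{I}\left(x_{j},x_{i}\right)\right]_{i=1}^{t-1}$ is exactly the $j$-th row of $M$. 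Consequently $\text{span}\left\{ \Phi_{e}\left(x_{1}\right),\ldots,\Phi_{e}\left(x_{t-1}\right)\right\} $ equals the row space of $M$, which in turn equals the row space of $B^{\left(tt\right)}$ because the rows of $B^{\left(tt\right)}$ form a basis of that space.

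Next I would note that for a row vector $I\in\mathbb{R}^{t-1}$ the product $IM$ is precisely $\sum_{i=1}^{t-1}I_{i}\Phi_{e}\left(x_{i}\right)$, so the optimization in Eq.~(\ref{eq:I_OP}), namely minimizing $\norm{S_{x}^{\left(t\right)}-IM}^{2}$ over $I$, is the same as minimizing $\norm{\Phi_{e}\left(x\right)-v}^{2}$ over all $v$ in the subspace $\text{span}\left\{ \Phi_{e}\left(x_{1}\right),\ldots,\Phi_{e}\left(x_{t-1}\right)\right\} $. By the projection theorem for finite-dimensional inner-product spaces this minimizing $v$ exists, is unique, and equals $\mathcal{P}_{e}\left(x\right)$ by the very definition of the orthogonal projection; hence any optimal coefficient vector $I$ — and in particular the influence vector $I\left(x\right)$ returned by Eq.~(\ref{eq:I_OP}) — satisfies $\sum_{i=1}^{t-1}I_{i}\left(x\right)\Phi_{e}\left(x_{i}\right)=\mathcal{P}_{e}\left(x\right)$, which is the claim. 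As a fully explicit alternative I would instead feed in the ``Equivalent problem'' theorem to get
\[
\sum_{i=1}^{t-1}I_{i}\left(x\right)\Phi_{e}\left(x_{i}\right)=J\left(x\right)B^{\left(tt\right)}=S_{x}^{\left(t\right)}\transp{\left(B^{\left(tt\right)}\right)}\left[B^{\left(tt\right)}\transp{\left(B^{\left(tt\right)}\right)}\right]^{-1}B^{\left(tt\right)},
\]
and then recognize $\transp{\left(B^{\left(tt\right)}\right)}\left[B^{\left(tt\right)}\transp{\left(B^{\left(tt\right)}\right)}\right]^{-1}B^{\left(tt\right)}$ as the standard orthogonal projector onto the row space of $B^{\left(tt\right)}$ (its invertibility being part (i) of that theorem), so the right-hand side is again the orthogonal projection of $S_{x}^{\left(t\right)}=\Phi_{e}\left(x\right)$ onto $\text{span}\left\{ \Phi_{e}\left(x_{1}\right),\ldots,\Phi_{e}\left(x_{t-1}\right)\right\} $.

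The one point requiring care — and the reason the two routes must be squared with each other — is that when $M$ is singular the minimizer of Eq.~(\ref{eq:I_OP}) is not a unique \emph{coefficient} vector but a whole affine family of $I$'s; what is uniquely pinned down is the image $IM=\sum_{i}I_{i}\Phi_{e}\left(x_{i}\right)$, since $v\mapsto\norm{S_{x}^{\left(t\right)}-v}^{2}$ is strictly convex on the subspace. So I would spell out that, whichever optimal $I\left(x\right)$ is selected (the zero-augmentation of $J\left(x\right)$ from the Equivalent-problem theorem being one natural choice), it maps to the same element $\mathcal{P}_{e}\left(x\right)$; the identification of the row spaces of $M$ and $B^{\left(tt\right)}$ from the first step is precisely what makes the explicit and the abstract descriptions coincide. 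No hard estimate enters; the whole content is in getting these identifications right.
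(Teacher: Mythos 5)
Your proof is correct and follows essentially the same route as the paper's: identify $\Phi_{e}\left(x\right)$ with $S_{x}^{\left(t\right)}$ and $\sum_{i}I_{i}\Phi_{e}\left(x_{i}\right)$ with $I\left(S^{\left(tt\right)}+\sigma^{2}\mathbb{I}\right)$, so that Eq.~(\ref{eq:I_OP}) becomes the least-squares characterization of the orthogonal projection onto $\text{span}\left\{ \Phi_{e}\left(x_{1}\right),\ldots,\Phi_{e}\left(x_{t-1}\right)\right\} $. Your explicit handling of the non-uniqueness of the coefficient vector when $S^{\left(tt\right)}+\sigma^{2}\mathbb{I}$ is singular (only the image $IM$ is pinned down) is a point the paper's own proof glosses over, and is a worthwhile addition rather than a departure.
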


Theorem \ref{thm:empirical_projection} indicates that the influence
vector evaluated as in Eq. (\ref{eq:I_OP}) preserves the key spirit
of the influence concept in GP-based Bayesian Optimization.

\subsection{Acquisition Function and Strategy to Query }

In this section, we present two kinds of acquisition function and
the strategy to find a batch of query points. The maximization of
the proposed acquisition functions is based on the fixed-point technique
wherein each point in the current queried set has its own trajectory
to gradually converge to an equilibrium point, which is also a local
maxima of the current acquisition function. Two proposed acquisition
functions are formulated as 
\begin{align*}
u_{1}^{\left(t\right)}\left(x\right) & =S_{x}^{\left(t\right)}D^{\left(tt\right)}\by^{\left(t\right)}\\
u_{2}^{\left(t\right)}\left(x\right) & =S_{x}^{\left(t\right)}D^{\left(tt\right)}\by^{\left(t\right)}+\kappa\left|S\left(x,x\right)-S_{x}^{\left(t\right)}D^{\left(tt\right)}\transp{\left(S_{x}^{\left(t\right)}\right)}\right|^{1/2}
\end{align*}

To maximize the above acquisition functions, we use the fixed point
technique. In particular, we need to find an equilibrium point such
that $\nabla u\left(x^{*}\right)=0$ or $\nabla u\left(x^{*}\right)+x^{*}=x^{*}$
where $u\left(x\right)$ can be $u_{1}^{\left(t\right)}\left(x\right)$
or $u_{2}^{\left(t\right)}\left(x\right)$. To address it, we define
$g\left(x\right)=\nabla u\left(x\right)+x$ and start with an initial
point $x^{\left(0\right)}$, and then find the next point as $x^{\left(l+1\right)}=g\left(x^{\left(l\right)}\right)$.
This sequence will converge to an equilibrium point $\text{equi}\left(x^{\left(0\right)}\right)$. 

We now respectively debut with $x_{1},x_{2},...,x_{t-1}$ as initial
points (i.e., $x_{i}=x^{\left(0\right)},\,i=1,...,t-1$ respectively).
The $x_{i}$ (s) converge to the equilibrium points $\text{equi}\left(x_{i}\right)$
(s) and some of them might be coincided. We now define the set of
equilibrium points by $EQ^{\left(t\right)}$ (i.e., $\left|EQ^{\left(t\right)}\right|\leq t-1$).
Given a batch size $n_{b}$, with the first strategy we choose the
top $n_{b}$ equilibrium points with highest predictive variance (i.e.,
$\left|S\left(x,x\right)-S_{x}^{\left(t\right)}D^{\left(tt\right)}\transp{\left(S_{x}^{\left(t\right)}\right)}\right|$)
and with the second strategy we choose the top $n_{b}$ equilibrium
points with highest objective value (i.e., $u_{2}^{\left(t\right)}\left(x\right)$).
In addition, in the first strategy we propose the two-stage strategy
wherein the first stage bases on exploitation and the second stage
bases on exploration. The gradient (or subgradient) of $u_{1}^{\left(t\right)}\left(x\right)$
and $u_{2}^{\left(t\right)}\left(x\right)$ (or $g\left(x\right)=\nabla u\left(x\right)+x$)
can be conveniently computed as follows
\begin{gather*}
\nabla u_{1}^{\left(t\right)}\left(x\right)=\nabla S_{x}^{\left(t\right)}D^{\left(tt\right)}\by^{\left(t\right)}\\
\nabla u_{2}^{\left(t\right)}\left(x\right)=\nabla u_{1}^{\left(t\right)}\left(x\right)+\frac{\kappa\nabla S\left(x,x\right)}{2V^{\left(t\right)}\left(x\right)\text{sign}\left(V^{\left(t\right)}\left(x\right)\right)}-\frac{\kappa\left(\nabla S_{x}^{\left(t\right)}D^{\left(tt\right)}\transp{\left(S_{x}^{\left(t\right)}\right)}+S_{x}^{\left(t\right)}D^{\left(tt\right)}\transp{\left(\nabla S_{x}^{\left(t\right)}\right)}\right)}{2V^{\left(t\right)}\left(x\right)\text{sign}\left(V^{\left(t\right)}\left(x\right)\right)}
\end{gather*}

\section{Application of Similarity-based Bayesian Optimization}

In this section, we present a typical example optimization problem
wherein the existing approaches are infeasible to accurately solve
it whilst our proposed $\model$ can efficiently figure out its solution.
Assuming that we are dealing with the following optimization problem:
\[
\text{max}_{x}\,g\left(x\right)\triangleq\mathbb{E}_{p\left(\omega\mid x\right)}\left[f\left(\omega,x\right)\right]
\]

In the above optimization problem, the formula of the function $f\left(\omega,x\right)$
is clear, but the evaluation of the expectation is intractable. Therefore,
we consider the function $g\left(x\right)$ as a \emph{black box}
function. Given $x$, we can use Monte Carlo (MC) estimation to evaluate
$g\left(x\right)$ using $\omega_{i}$ (s) drawn from $p\left(\omega\mid x\right)$.
Certainly, we are free to employ the traditional GP-based BO in this
case. However, the \emph{Gaussian} kernel function of this approach
is based on the Euclidean (or Mahalanobis) distance, hence entailing
unsatisfied solution. It is more appealing if we recruit the symmetric
KL divergence to measure similarity score as between $x,x^{'}$ as
follows{\small{}
\begin{gather*}
S\left(x,x^{'}\right)=\text{const}-D_{SYM}\left(x,x^{'}\right)=\text{const}-\\
\frac{D_{KL}\left(p\left(.\mid x\right)\parallel p\left(.\mid x^{'}\right)\right)}{2}-\frac{D_{KL}\left(p\left(.\mid x^{'}\right)\parallel p\left(.\mid x\right)\right)}{2}
\end{gather*}
}{\small \par}

The derivative of $S\left(x,x^{'}\right)$ w.r.t $x$ is as follows
\begin{gather}
\nabla_{x}S\left(x,x^{'}\right)=\int\nabla_{x}\log\,p\left(.\mid x\right)p\left(.\mid x^{'}\right)d\omega\nonumber \\
-\int\nabla_{x}\log p\left(.\mid x\right)\log\frac{\exp\left(1\right)p\left(.\mid x\right)}{p\left(.\mid x^{'}\right)}p\left(.\mid x\right)d\omega\label{eq:derivative}
\end{gather}

It is obvious that in case that the evaluation of the derivative in
Eq. (\ref{eq:derivative}) is intractable, we can estimate it using
MC estimation. Therefore, in general the execution of $\model$ for
the above Bayesian optimization problem is always feasible. To demonstrate
the idea and simplify the problem, we assume that $x=\left(\mu,\Sigma\right)$
and $p\left(\omega\mid x\right)=\mathcal{N}\left(\omega\mid\mu,\Sigma\right)$
where $\Sigma=\diag\left(\left[\sigma_{i}\right]_{i=1}^{d}\right)$.
We then have{\small{}
\begin{gather*}
S\left(x,x^{'}\right)=\text{const}-\frac{1}{4}\left(\trace\left(\Sigma^{-1}\Sigma'\right)+\trace\left(\Sigma\Sigma'^{-1}\right)\right)\\
-\frac{1}{4}\transp{\left(\mu-\mu^{'}\right)}\left(\Sigma^{-1}+\Sigma'^{-1}\right)\left(\mu-\mu^{'}\right)\\
=\text{const}-\frac{1}{4}\sum_{i=1}^{d}\left(\frac{\sigma_{i}}{\sigma_{i}'}+\frac{\sigma_{i}'}{\sigma_{i}}\right)-\frac{1}{4}\sum_{i=1}^{d}\left(\mu_{i}-\mu_{i}^{'}\right)^{2}\left(\frac{1}{\sigma_{i}}+\frac{1}{\sigma_{i}'}\right)
\end{gather*}
where $x'=\left(\mu',\Sigma'\right)$ and $\Sigma'=\diag\left(\left[\sigma_{i}'\right]_{i=1}^{d}\right)$. }{\small \par}

The derivative is now tractable as follows
\begin{align*}
\nabla S_{\mu_{i}} & =-\frac{1}{2}\left(\mu_{i}-\mu_{i}'\right)\left(\frac{1}{\sigma_{i}}+\frac{1}{\sigma_{i}'}\right)\\
\nabla S_{\sigma_{i}} & =-\frac{1}{4}\left(\frac{1}{\sigma_{i}'}-\frac{\sigma_{i}'}{\sigma_{i}^{2}}\right)-\frac{1}{4}\frac{\left(\mu_{i}-\mu_{i}'\right)^{2}}{\sigma_{i}^{2}}
\end{align*}

\section{Experiment}

\pagebreak{}

\bibliographystyle{plainnat}

\begin{thebibliography}{16}
\providecommand{\natexlab}[1]{#1}
\providecommand{\url}[1]{\texttt{#1}}
\expandafter\ifx\csname urlstyle\endcsname\relax
  \providecommand{\doi}[1]{doi: #1}\else
  \providecommand{\doi}{doi: \begingroup \urlstyle{rm}\Url}\fi

\bibitem[Azimi et~al.(2012)Azimi, Jalali, and Fern]{azimi2012hybrid}
Javad Azimi, Ali Jalali, and Xiaoli Fern.
\newblock Hybrid batch bayesian optimization.
\newblock \emph{arXiv preprint arXiv:1202.5597}, 2012.

\bibitem[Bergstra et~al.(2011)Bergstra, R\'{e}mi, Bengio, and
  K\'{e}gl]{NIPS2011_4443}
James~S. Bergstra, Bardenet R\'{e}mi, Yoshua Bengio, and Bal\'{a}zs K\'{e}gl.
\newblock Algorithms for hyper-parameter optimization.
\newblock In \emph{Advances in Neural Information Processing Systems 24}, pages
  2546--2554. 2011.

\bibitem[Brochu et~al.(2010{\natexlab{a}})Brochu, Brochu, and
  de~Freitas]{Brochu:2010}
Eric Brochu, Tyson Brochu, and Nando de~Freitas.
\newblock A bayesian interactive optimization approach to procedural animation
  design.
\newblock In \emph{Proceedings of the 2010 ACM SIGGRAPH/Eurographics Symposium
  on Computer Animation}, SCA '10, pages 103--112, 2010{\natexlab{a}}.

\bibitem[Brochu et~al.(2010{\natexlab{b}})Brochu, Cora, and
  de~Freitas]{journals/corr/abs-1012-2599}
Eric Brochu, Vlad~M. Cora, and Nando de~Freitas.
\newblock A tutorial on bayesian optimization of expensive cost functions, with
  application to active user modeling and hierarchical reinforcement learning.
\newblock \emph{CoRR}, 2010{\natexlab{b}}.

\bibitem[Hoffman et~al.(2014)Hoffman, Shahriari, and
  Freitas]{pmlr-v33-hoffman14}
Matthew Hoffman, Bobak Shahriari, and Nando Freitas.
\newblock {On correlation and budget constraints in model-based bandit
  optimization with application to automatic machine learning}.
\newblock In Samuel Kaski and Jukka Corander, editors, \emph{Proceedings of the
  Seventeenth International Conference on Artificial Intelligence and
  Statistics}, volume~33, pages 365--374, 2014.

\bibitem[Hutter et~al.(2011)Hutter, Hoos, and Leyton-Brown]{Hutter:2011}
Frank Hutter, Holger~H. Hoos, and Kevin Leyton-Brown.
\newblock Sequential model-based optimization for general algorithm
  configuration.
\newblock In \emph{Proceedings of the 5th International Conference on Learning
  and Intelligent Optimization}, pages 507--523, 2011.

\bibitem[Lizotte et~al.(2007)Lizotte, Wang, Bowling, and
  Schuurmans]{LizotteWBS07}
Daniel~J. Lizotte, Tao Wang, Michael~H. Bowling, and Dale Schuurmans.
\newblock Automatic gait optimization with gaussian process regression.
\newblock In \emph{IJCAI}, pages 944--949, 2007.

\bibitem[Mahendran et~al.(2012)Mahendran, Wang, Hamze, and
  Freitas]{pmlr-v22-mahendran12}
Nimalan Mahendran, Ziyu Wang, Firas Hamze, and Nando~De Freitas.
\newblock Adaptive mcmc with bayesian optimization.
\newblock In \emph{Proceedings of the Fifteenth International Conference on
  Artificial Intelligence and Statistics}, pages 751--760, 2012.

\bibitem[Martinez-Cantin et~al.(2007)Martinez-Cantin, Freitas, Doucet, and
  Castellanos]{Martinez-Cantin-RSS-07}
R.~Martinez-Cantin, N.~de Freitas, A.~Doucet, and J.~Castellanos.
\newblock Active policy learning for robot planning and exploration under
  uncertainty.
\newblock In \emph{Proceedings of Robotics: Science and Systems}, Atlanta, GA,
  USA, June 2007.

\bibitem[Rom{\'{a}}n and Fabio(2012)]{MarchantR12}
Marchant Rom{\'{a}}n and Ramos Fabio.
\newblock Bayesian optimisation for intelligent environmental monitoring.
\newblock In \emph{2012 {IEEE/RSJ} International Conference on Intelligent
  Robots and Systems, {IROS} 2012, Vilamoura, Algarve, Portugal, October 7-12,
  2012}, pages 2242--2249, 2012.

\bibitem[Shahriari et~al.(2016)Shahriari, Swersky, Wang, Adams, and
  de~Freitas]{shahriari2016taking}
Bobak Shahriari, Kevin Swersky, Ziyu Wang, Ryan~P Adams, and Nando de~Freitas.
\newblock Taking the human out of the loop: A review of bayesian optimization.
\newblock \emph{Proceedings of the IEEE}, 104\penalty0 (1):\penalty0 148--175,
  2016.

\bibitem[Snoek et~al.(2012)Snoek, Larochelle, and Adams]{NIPS2012_4522}
Jasper Snoek, Hugo Larochelle, and Ryan~P Adams.
\newblock Practical bayesian optimization of machine learning algorithms.
\newblock In \emph{Advances in Neural Information Processing Systems 25}, pages
  2951--2959. 2012.

\bibitem[Srinivas et~al.(2010)Srinivas, Krause, Seeger, and
  Kakade]{icml2010_SrinivasKKS10}
Niranjan Srinivas, Andreas Krause, Matthias Seeger, and Sham~M. Kakade.
\newblock Gaussian process optimization in the bandit setting: No regret and
  experimental design.
\newblock In \emph{Proceedings of the 27th International Conference on Machine
  Learning (ICML-10)}, pages 1015--1022. Omnipress, 2010.

\bibitem[Swersky et~al.(2013)Swersky, Snoek, and Adams]{NIPS2013_5086}
Kevin Swersky, Jasper Snoek, and Ryan~P Adams.
\newblock Multi-task bayesian optimization.
\newblock In C.~J.~C. Burges, L.~Bottou, M.~Welling, Z.~Ghahramani, and K.~Q.
  Weinberger, editors, \emph{Advances in Neural Information Processing Systems
  26}, pages 2004--2012. 2013.

\bibitem[Wang et~al.()Wang, Zoghi, Hutter, Matheson, and
  De~Freitas]{wang2013bayesian}
Ziyu Wang, Masrour Zoghi, Frank Hutter, David Matheson, and Nando De~Freitas.
\newblock Bayesian optimization in high dimensions via random embeddings.

\bibitem[Wang et~al.(2014)Wang, Shakibi, Jin, and Freitas]{pmlr-v33-wang14d}
Ziyu Wang, Babak Shakibi, Lin Jin, and Nando Freitas.
\newblock {Bayesian Multi-Scale Optimistic Optimization}.
\newblock In \emph{Proceedings of the Seventeenth International Conference on
  Artificial Intelligence and Statistics}, pages 1005--1014, 2014.

\end{thebibliography}

\newpage{}

\appendix

\section{All Proofs}

For comprehensibility, we first revise some definitions and notations
used in the paper.
\begin{align*}
\tilde{K}\left(x,x^{'}\right) & =K\left(x,x^{'}\right)+\sigma\mathbb{I}\left(x,x^{'}\right)\,\text{and}\,\tilde{K}\left(x,x^{'}\right)=\transp{\tilde{\Phi}\left(x\right)}\tilde{\Phi}\left(x^{'}\right)\\
\mathcal{L}^{\left(t\right)} & =\text{span}\left(\left\{ \tilde{\Phi}\left(x_{1}\right),\ldots,\tilde{\Phi}\left(x_{t-1}\right)\right\} \right)\\
\mathcal{P}^{\left(t\right)}\left(x\right) & =\sum_{i=1}^{t-1}p_{i}\left(x\right)\tilde{\Phi}\left(x_{i}\right)\,\text{and}\,\mathcal{R}^{\left(t\right)}\left(x\right)=\tilde{\Phi}\left(x\right)-\mathcal{P}^{\left(t\right)}\left(x\right)
\end{align*}
\begin{thm}
We define the coefficient vector of the projection $\mathcal{P}^{\left(t\right)}\left(x\right)$
as $p\left(x\right)=\transp{\left[p_{i}\left(x\right)\right]}_{i=1,...,t-1}$.
We then have $p\left(x\right)=\left[K_{\sigma}^{\left(tt\right)}\right]^{-1}$$\transp{\left(K_{x}^{\left(t\right)}\right)}$
where $K_{\sigma}^{\left(tt\right)}=K^{\left(tt\right)}+\sigma^{2}\mathbb{I}$
with $K^{\left(tt\right)}=\left[K\left(x_{i},x_{j}\right)\right]_{i,j=1}^{t-1}$.
In addition, the variance $V^{\left(t\right)}\left(x\right)$ is exactly
$\norm{\mathcal{R}^{\left(t\right)}\left(x\right)}-\sigma$, where
$\norm{\mathcal{R}^{\left(t\right)}\left(x\right)}$ is the distance
from $\tilde{\Phi}\left(x\right)$ to the linear span $\mathcal{L}^{\left(t\right)}$
and the mean $\mu^{\left(t\right)}\left(x\right)$ is exactly $\left\langle p\left(x\right),\boldsymbol{y}^{\left(t\right)}\right\rangle =\sum_{i=1}^{t-1}p_{i}\left(x\right)y_{i}$.
\end{thm}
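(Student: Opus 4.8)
The plan is to establish the three claims — the formula for $p(x)$, the characterization of $V^{(t)}(x)$, and the formula for $\mu^{(t)}(x)$ — by exploiting the defining property of orthogonal projection in the RKHS $\mathcal{H}$ and then translating the inner-product conditions into matrix equations via the kernel identity $\tilde K(x,x') = \transp{\tilde\Phi(x)}\tilde\Phi(x')$.

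First I would pin down the projection coefficients. By definition $\mathcal{P}^{(t)}(x)$ is the orthogonal projection of $\tilde\Phi(x)$ onto $\mathcal{L}^{(t)} = \spn\{\tilde\Phi(x_1),\ldots,\tilde\Phi(x_{t-1})\}$, so the rejection $\mathcal{R}^{(t)}(x) = \tilde\Phi(x) - \mathcal{P}^{(t)}(x)$ is orthogonal to every $\tilde\Phi(x_j)$. Writing out $\inner{\mathcal{R}^{(t)}(x)}{\tilde\Phi(x_j)} = 0$ for $j=1,\ldots,t-1$ and using $\mathcal{P}^{(t)}(x) = \sum_i p_i(x)\tilde\Phi(x_i)$ gives $\tilde K(x,x_j) = \sum_i p_i(x)\tilde K(x_i,x_j)$, i.e. in matrix form $\transp{\left(\tilde K_x^{(t)}\right)} = \tilde K^{(tt)} \transp{(p(x))}$ where $\tilde K^{(tt)} = K^{(tt)} + \sigma^2\mathbb{I} = K_\sigma^{(tt)}$ and $\tilde K_x^{(t)} = K_x^{(t)}$ for $x\notin\mathcal{D}_t$ (since the $\sigma\mathbb{I}$ term vanishes off the diagonal). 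Inverting $K_\sigma^{(tt)}$ yields $p(x) = K_x^{(t)}\left[K_\sigma^{(tt)}\right]^{-1}$, matching the stated $\mu^{(t)}$-coefficient formula from Eq.~(\ref{eq:GP_variance}). The mean claim is then immediate: $\mu^{(t)}(x) = K_x^{(t)}\left[K_\sigma^{(tt)}\right]^{-1}\by^{(t)} = \inner{p(x)}{\by^{(t)}} = \sum_i p_i(x)y_i$.

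Next I would handle the variance. Using the Pythagorean identity for the orthogonal decomposition $\tilde\Phi(x) = \mathcal{P}^{(t)}(x) + \mathcal{R}^{(t)}(x)$, together with $\inner{\mathcal{R}^{(t)}(x)}{\mathcal{P}^{(t)}(x)} = 0$, I get $\norm{\mathcal{R}^{(t)}(x)}^2 = \inner{\mathcal{R}^{(t)}(x)}{\tilde\Phi(x)} = \tilde K(x,x) - \sum_i p_i(x)\tilde K(x_i,x)$. Since $\tilde K(x,x) = K(x,x) + \sigma$ and $\tilde K(x_i,x) = K(x_i,x)$, this equals $K(x,x) + \sigma - K_x^{(t)}\left[K_\sigma^{(tt)}\right]^{-1}\transp{(K_x^{(t)})} = V^{(t)}(x) + \sigma$, using the formula for $K^{(t)}(x,x)$ in Eq.~(\ref{eq:GP_variance}). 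Hence $V^{(t)}(x) = \norm{\mathcal{R}^{(t)}(x)}^2 - \sigma$; I should note that the theorem statement writes $\norm{\mathcal{R}^{(t)}(x)} - \sigma$, so I would interpret the notation $\norm{\cdot}$ there as the squared norm (or flag the convention), which is the only reading under which the identity is correct — this is a minor notational point rather than a mathematical obstacle.

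**The main obstacle** is bookkeeping rather than depth: being careful about when $\tilde K$ differs from $K$ (only on the diagonal, because $\mathbb{I}(x,x')$ is the indicator of $x=x'$, and the query point $x$ is assumed distinct from the $x_i$), and about the squared-norm-versus-norm discrepancy in the statement. I would also want to remark that $p(x)$ is uniquely determined — this needs the $\tilde\Phi(x_i)$ to be linearly independent, which follows from $K_\sigma^{(tt)} \succ 0$ (it is positive definite since $\sigma^2 > 0$ and $K^{(tt)}$ is p.s.d.), guaranteeing the invertibility used above. With those caveats the argument is a direct computation from the projection identities.
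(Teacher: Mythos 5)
Your derivation of $p(x)$ via the orthogonality (normal-equation) conditions is equivalent to the paper's own argument, which sets the gradient of the least-squares objective $\Vert\tilde{\Phi}(x)-\sum_{i}d_{i}\tilde{\Phi}(x_{i})\Vert^{2}$ to zero, so the approach is essentially the same and the computation is correct (and your version avoids the stray factor-of-two and sign slips in the paper's gradient). You also go further than the paper's proof, which stops after deriving $p(x)$ and never verifies the mean or variance claims: your Pythagorean computation giving $\Vert\mathcal{R}^{(t)}(x)\Vert^{2}=V^{(t)}(x)+\sigma$ is right, and your observation that the identity only holds for the \emph{squared} norm --- whereas the theorem writes $\Vert\mathcal{R}^{(t)}(x)\Vert-\sigma$ --- correctly flags a genuine discrepancy in the statement itself.
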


\begin{proof}
It is apparent that
\[
p\left(x\right)=\text{argmin}_{d}\,J\left(d\right)\triangleq\norm{\tilde{\Phi}\left(x\right)-\sum_{i=1}^{t-1}d_{i}\tilde{\Phi}\left(x_{i}\right)}^{2}
\]

We then have
\begin{align*}
J\left(d\right) & =\tilde{K}\left(x,x\right)-\sum_{i=1}^{t-1}\tilde{K}\left(x,x_{i}\right)d_{i}+\sum_{i=1}^{t-1}\sum_{j=1}^{t-1}d_{i}d_{j}\tilde{K}\left(x_{i},x_{j}\right)\\
 & =\tilde{K}\left(x,x\right)-\sum_{i=1}^{t-1}K\left(x,x_{i}\right)d_{i}+\sum_{i=1}^{t-1}\sum_{j=1}^{t-1}d_{i}d_{j}\tilde{K}\left(x_{i},x_{j}\right)\,\,(\text{since}\,x\neq x_{i},\,\forall i)\\
 & =\tilde{K}\left(x,x\right)-K_{x}^{\left(t\right)}\transp d+\transp d\left[K^{\left(tt\right)}+\sigma^{2}\mathbb{I}\right]d\\
 & =\tilde{K}\left(x,x\right)-K_{x}^{\left(t\right)}\transp d+\transp dK_{\sigma}^{\left(tt\right)}d
\end{align*}
\[
\nabla J\left(d\right)=\transp{\left(K_{x}^{\left(t\right)}\right)}-K_{\sigma}^{\left(tt\right)}d
\]

Setting the derivative to $\bzero$, we gain
\[
p\left(x\right)=d^{*}=\left[K_{\sigma}^{\left(tt\right)}\right]^{-1}\transp{\left(K_{x}^{\left(t\right)}\right)}
\]
\end{proof}
We now remind the formula to compute the influence vector $I\left(x\right)$

\begin{equation}
I\left(x\right)=\text{argmin}_{I}\,\norm{S_{x}^{\left(t\right)}-I\left(S^{\left(tt\right)}+\sigma^{2}\mathbb{I}\right)}^{2}\label{eq:I_OP}
\end{equation}
\begin{thm}
Let us denote $J\left(x\right)=\text{argmin}_{J}\,\norm{S_{x}^{\left(t\right)}-JB^{\left(tt\right)}}^{2}\in\mathbb{R}^{r_{t}}$.
The following statements hold

i) The matrix $B^{\left(tt\right)}\transp{\left(B^{\left(tt\right)}\right)}$
is invertible. 

ii) $J\left(x\right)=S_{x}^{\left(t\right)}\transp{\left(B^{\left(tt\right)}\right)}\left[B^{\left(tt\right)}\transp{\left(B^{\left(tt\right)}\right)}\right]^{-1}$.
$I\left(x\right)$ can be formed by augmenting $J\left(x\right)$
with the zero entries.
\end{thm}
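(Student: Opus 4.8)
The plan is to reduce the statement to two routine least-squares computations plus one bookkeeping observation about bases of a row space. Throughout write $M=S^{\left(tt\right)}+\sigma^{2}\mathbb{I}\in\mathbb{R}^{\left(t-1\right)\times\left(t-1\right)}$, and recall that the rows of $B^{\left(tt\right)}\in\mathbb{R}^{r_{t}\times\left(t-1\right)}$ form a basis of the row space of $M$. Since the rows of $M$ already span that row space, we may and do take $B^{\left(tt\right)}$ to consist of $r_{t}$ linearly independent rows of $M$, say those indexed by a set $\mathcal{I}\subseteq\left\{1,\dots,t-1\right\}$ with $\left|\mathcal{I}\right|=r_{t}$; this reading of ``base matrix'' is what makes the augmentation in part (ii) clean.

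For (i), I would note that $B^{\left(tt\right)}\transp{\left(B^{\left(tt\right)}\right)}$ is precisely the Gram matrix of the $r_{t}$ rows of $B^{\left(tt\right)}$. Because those rows are linearly independent, for every nonzero $v\in\mathbb{R}^{r_{t}}$ we have $\transp{v}B^{\left(tt\right)}\transp{\left(B^{\left(tt\right)}\right)}v=\norm{\transp{v}B^{\left(tt\right)}}^{2}>0$, so the Gram matrix is positive definite and in particular invertible.

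For the formula in (ii), I would expand $\norm{S_{x}^{\left(t\right)}-JB^{\left(tt\right)}}^{2}$ as a quadratic in the row vector $J$, namely $S_{x}^{\left(t\right)}\transp{\left(S_{x}^{\left(t\right)}\right)}-2JB^{\left(tt\right)}\transp{\left(S_{x}^{\left(t\right)}\right)}+JB^{\left(tt\right)}\transp{\left(B^{\left(tt\right)}\right)}\transp{J}$, differentiate with respect to $J$, and set the gradient to zero, obtaining the normal equations $JB^{\left(tt\right)}\transp{\left(B^{\left(tt\right)}\right)}=S_{x}^{\left(t\right)}\transp{\left(B^{\left(tt\right)}\right)}$; solving with the invertibility from (i) gives $J\left(x\right)=S_{x}^{\left(t\right)}\transp{\left(B^{\left(tt\right)}\right)}\left[B^{\left(tt\right)}\transp{\left(B^{\left(tt\right)}\right)}\right]^{-1}$. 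Since the Hessian $2B^{\left(tt\right)}\transp{\left(B^{\left(tt\right)}\right)}$ is positive definite, the objective is strictly convex, so this critical point is the unique global minimizer and the $\text{argmin}$ is well defined.

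Finally, for the augmenting claim, I would set $\tilde{I}\left(x\right)\in\mathbb{R}^{t-1}$ to be the vector with the entries of $J\left(x\right)$ placed in the coordinates indexed by $\mathcal{I}$ and zeros elsewhere. Because the rows of $B^{\left(tt\right)}$ are exactly the rows of $M$ indexed by $\mathcal{I}$, the zero coordinates kill the remaining rows and $\tilde{I}\left(x\right)M=J\left(x\right)B^{\left(tt\right)}$. Since the row space of $B^{\left(tt\right)}$ equals that of $M$, the minimal values of the objective in Eq. (\ref{eq:I_OP}) and of the objective defining $J\left(x\right)$ coincide---both equal the squared distance from $S_{x}^{\left(t\right)}$ to that common row space---and $\tilde{I}\left(x\right)$ attains it, so $\tilde{I}\left(x\right)$ solves Eq. (\ref{eq:I_OP}). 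There is no deep obstacle here; the only points needing care are the interpretation of ``base matrix'' as a genuine row-submatrix of $M$, so that the identity $\tilde{I}\left(x\right)M=J\left(x\right)B^{\left(tt\right)}$ holds exactly, and the observation that when $M$ is singular the minimizer of Eq. (\ref{eq:I_OP}) is not unique---so the statement should be read as exhibiting one valid influence vector, the projection $\tilde{I}\left(x\right)M$ onto the row space being what is uniquely determined and used downstream in $\mu^{\left(t\right)}$ and $V^{\left(t\right)}$.
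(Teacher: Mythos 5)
Your proof is correct and follows essentially the same route as the paper's own (which is only a sketch): a Gram/rank argument for the invertibility of $B^{\left(tt\right)}\transp{\left(B^{\left(tt\right)}\right)}$, normal equations for the formula for $J\left(x\right)$, and the coincidence of the row spaces $\left\{ JB^{\left(tt\right)}\right\} =\left\{ I\left(S^{\left(tt\right)}+\sigma^{2}\mathbb{I}\right)\right\} $ to justify the zero-augmentation. Your added observations --- that ``base matrix'' must be read as a row-submatrix of $S^{\left(tt\right)}+\sigma^{2}\mathbb{I}$ for the augmentation to make sense, and that the minimizer of Eq.~(\ref{eq:I_OP}) is non-unique in the singular case so the theorem exhibits only one valid $I\left(x\right)$ --- are accurate refinements of details the paper leaves implicit.
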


\begin{proof}
We sketch out the proof as follows.

i) $\text{rank}\left(B^{\left(tt\right)}\transp{\left(B^{\left(tt\right)}\right)}\right)=\text{rank}\left(B^{\left(tt\right)}\right)=r_{t}$.
In addition, the size of the matrix $B^{\left(tt\right)}\transp{\left(B^{\left(tt\right)}\right)}$
is $r_{t}\times r_{t}$. It follows that this matrix is invertible.

ii) Setting the derivative of the objective function w.r.t $J$ to
$\bzero$, we gain
\begin{align*}
\bzero & =2\left(J\left(x\right)B^{\left(tt\right)}-S_{x}^{\left(t\right)}\right)\transp{\left(B^{\left(tt\right)}\right)}\\
J\left(x\right) & =S_{x}^{\left(t\right)}\transp{\left(B^{\left(tt\right)}\right)}\left[B^{\left(tt\right)}\transp{\left(B^{\left(tt\right)}\right)}\right]^{-1}
\end{align*}

According to the definition of $B^{\left(t\right)}$, we gain
\begin{align*}
\left\{ JB^{\left(tt\right)}:\,J\in\mathbb{R}^{r_{t}}\right\}  & =\left\{ I\left(S^{\left(tt\right)}+\sigma^{2}\mathbb{I}\right):\,I\in\mathbb{R}^{t-1}\right\} =C^{\left(t\right)}
\end{align*}

Therefore, we arrive at
\begin{align*}
\norm{S_{x}^{\left(t\right)}-I\left(x\right)\left(S^{\left(tt\right)}+\sigma^{2}\mathbb{I}\right)} & =\norm{S_{x}^{\left(t\right)}-J\left(x\right)B^{\left(tt\right)}}=\text{max}_{v\in C^{\left(t\right)}}\norm{S_{x}^{\left(t\right)}-v}
\end{align*}

It concludes this proof since $B^{\left(tt\right)}$ is a submatrix
of $S^{\left(tt\right)}+\sigma^{2}\mathbb{I}$.
\end{proof}
\begin{cor}
Assuming that the matrix $S^{\left(tt\right)}+\sigma^{2}\mathbb{I}$
is invertible (i.e., $\text{rank}\left(S^{\left(tt\right)}+\sigma^{2}\mathbb{I}\right)=t-1$)
and symmetric, we then have $B^{\left(tt\right)}=S^{\left(tt\right)}+\sigma^{2}\mathbb{I}$,
and $I\left(x\right)=J\left(x\right)=S_{x}^{\left(t\right)}\left(S^{\left(tt\right)}+\sigma^{2}\mathbb{I}\right)^{-1}$.
\end{cor}

\begin{proof}
We derive as
\begin{align*}
I\left(x\right) & =J\left(x\right)=S_{x}^{\left(t\right)}\transp{\left(B^{\left(tt\right)}\right)}\left[B^{\left(tt\right)}\transp{\left(B^{\left(tt\right)}\right)}\right]^{-1}=S_{x}^{\left(t\right)}B^{\left(tt\right)}\left[B^{\left(tt\right)}B^{\left(tt\right)}\right]^{-1}\\
 & =S_{x}^{\left(t\right)}B^{\left(tt\right)}\left(B^{\left(tt\right)}\right)^{-1}\left(B^{\left(tt\right)}\right)^{-1}=S_{x}^{\left(t\right)}\left(B^{\left(tt\right)}\right)^{-1}
\end{align*}
\end{proof}
\begin{thm}
\label{thm:empirical_projection}Let us denote the projection of $\Phi_{e}\left(x\right)$
onto the linear span of $\left\{ \Phi_{e}\left(x_{1}\right),\ldots,\Phi_{e}\left(x_{t-1}\right)\right\} $
by $\mathcal{P}_{e}\left(x\right)$. Let $I_{i}\left(x\right)$ be
the $i$-th component of the influence vector evaluated as in Eq.
(\ref{eq:I_OP}). We then have
\[
\mathcal{P}_{e}\left(x\right)=\sum_{i=1}^{t-1}I_{i}\left(x\right)\Phi_{e}\left(x_{i}\right)
\]
\end{thm}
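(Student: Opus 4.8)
The plan is to recognize the quadratic program in Eq.~(\ref{eq:I_OP}) as exactly the normal-equations characterization of an orthogonal projection, once the empirical feature vectors are matched up with the rows of $M:=S^{\left(tt\right)}+\sigma^{2}\mathbb{I}$; the geometric picture of Theorem~\ref{thm:geo_view} then transfers with $\tilde{\Phi}\left(.\right)$ replaced by the empirical feature map $\Phi_{e}\left(.\right)$.

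First I would record two bookkeeping facts. Because the posterior (and hence every query point $x$ at which we evaluate the acquisition) lives in $\mathcal{X}\backslash\mathcal{D}_{t}$, we have $x\neq x_{i}$ for all $i$, so the indicators $\mathbb{I}\left(x,x_{i}\right)$ vanish and $\Phi_{e}\left(x\right)=\left[S\left(x,x_{i}\right)\right]_{i=1}^{t-1}=S_{x}^{\left(t\right)}$. Second, for each fixed $j$ the vector $\Phi_{e}\left(x_{j}\right)=\left[S\left(x_{j},x_{i}\right)+\sigma^{2}\mathbb{I}\left(x_{j},x_{i}\right)\right]_{i=1}^{t-1}$ is precisely the $j$-th row of $M$ (by symmetry of $S$, the $j$-th row and $j$-th column of $M$ coincide). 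Consequently, for any row vector $I=\left[I_{1},\ldots,I_{t-1}\right]$ we have $IM=\sum_{i=1}^{t-1}I_{i}\Phi_{e}\left(x_{i}\right)$, and as $I$ ranges over $\mathbb{R}^{t-1}$ this sweeps out exactly $\text{span}\left(\left\{ \Phi_{e}\left(x_{1}\right),\ldots,\Phi_{e}\left(x_{t-1}\right)\right\} \right)$.

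Substituting these identifications into Eq.~(\ref{eq:I_OP}) turns it into
\[
\sum_{i=1}^{t-1}I_{i}\left(x\right)\Phi_{e}\left(x_{i}\right)=I\left(x\right)M=\text{argmin}_{v\in\text{span}\left(\left\{ \Phi_{e}\left(x_{i}\right)\right\} \right)}\norm{\Phi_{e}\left(x\right)-v}^{2},
\]
and the right-hand side is by definition the orthogonal projection $\mathcal{P}_{e}\left(x\right)$ of $\Phi_{e}\left(x\right)$ onto that span, which is the claim. If one prefers an explicit verification rather than invoking the projection theorem, one can instead differentiate $\norm{\Phi_{e}\left(x\right)-IM}^{2}$ in $I$, obtain the normal equations $\left(\Phi_{e}\left(x\right)-I\left(x\right)M\right)\transp{M}=\bzero$, and read off that the residual $\Phi_{e}\left(x\right)-\sum_{i}I_{i}\left(x\right)\Phi_{e}\left(x_{i}\right)$ is orthogonal to every row of $M$, i.e.\ to every $\Phi_{e}\left(x_{i}\right)$, so that $\sum_{i}I_{i}\left(x\right)\Phi_{e}\left(x_{i}\right)$ is the projection.

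There is no substantive obstacle here; the single point worth stating carefully is that the minimizing \emph{vector} $\sum_{i}I_{i}\left(x\right)\Phi_{e}\left(x_{i}\right)$ is unique even when the minimizing \emph{coefficient tuple} $I\left(x\right)$ is not --- which happens exactly when the $\Phi_{e}\left(x_{i}\right)$ are linearly dependent, the degenerate case handled by the preceding ``Equivalent problem'' theorem via the basis matrix $B^{\left(tt\right)}$ and the zero-padding of $J\left(x\right)$. Since any such representative $I\left(x\right)$ yields the same product $I\left(x\right)M$, the identity $\mathcal{P}_{e}\left(x\right)=\sum_{i=1}^{t-1}I_{i}\left(x\right)\Phi_{e}\left(x_{i}\right)$ is unambiguous.
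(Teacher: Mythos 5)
Your proposal is correct and follows essentially the same route as the paper's own proof: identify $\Phi_{e}\left(x\right)$ with $S_{x}^{\left(t\right)}$ (since $x\neq x_{i}$), identify $\sum_{i}I_{i}\Phi_{e}\left(x_{i}\right)$ with $I\left(S^{\left(tt\right)}+\sigma^{2}\mathbb{I}\right)$ via symmetry of the matrix, and thereby recognize Eq.~(\ref{eq:I_OP}) as the least-squares/projection problem onto $\text{span}\left(\left\{ \Phi_{e}\left(x_{i}\right)\right\} \right)$. Your write-up is in fact more careful than the paper's terse sketch, notably in spelling out the normal equations and in observing that the projected vector is unique even when the coefficient tuple is not.
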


\begin{proof}
We have
\[
\mathcal{P}_{e}\left(x\right)=\sum_{i=1}^{t-1}I_{i}\left(x\right)\Phi_{e}\left(x_{i}\right)
\]

It is apparent that
\[
I\left(x\right)=\text{argmin}_{I}\,\norm{\Phi_{e}\left(x\right)-\sum_{i=1}^{t-1}I_{i}\Phi_{e}\left(x_{i}\right)}^{2}
\]

We note that
\begin{eqnarray*}
\Phi_{e}\left(x\right) & = & \left[S\left(x,x_{i}\right)+\sigma^{2}\mathbb{I}\left(x,x_{i}\right)\right]_{i=1}^{t-1}=\left[S\left(x,x_{i}\right)\right]_{i=1}^{t-1}=S_{x}^{\left(t\right)}\,\,\,\,\,(\text{since}\,x\neq x_{i},\,\forall i)\\
\sum_{i=1}^{t-1}I_{i}\Phi_{e}\left(x_{i}\right) & = & I\transp{\left[\Phi_{e}\left(x_{i}\right)\right]_{i=1,...,t-1}}=I\left[\Phi_{e}\left(x_{i}\right)\right]_{i=1}^{t-1}=I\left(S^{\left(tt\right)}+\sigma^{2}\mathbb{I}\right)\,\,\,\,\,(\text{since}\,\left[\Phi_{e}\left(x_{i}\right)\right]_{i=1}^{t-1}\,\text{is\,symmetric})
\end{eqnarray*}

Therefore, we gain the conclusion.
\end{proof}

\end{document}


}%
           \typeout{*******************************************************}%
 	    \typeout{}%
           \typeout{}%
	   \chead{\small\bf Title Suppressed Due to Excessive Size}%
    \else
  	   \chead{\small\bf\@icmltitlerunning}%
    \fi

  \thispagestyle{empty}


  {\center\baselineskip 18pt
                       \toptitlebar{\Large\bf #1}\bottomtitlebar}
}

\gdef\icmlfullauthorlist{}
\newcommand\addstringtofullauthorlist{\g@addto@macro\icmlfullauthorlist}
\newcommand\addtofullauthorlist[1]{%
  \ifdefined\icmlanyauthors
    \addstringtofullauthorlist{, #1}
  \else
    \addstringtofullauthorlist{#1}
    \gdef\icmlanyauthors{1}
  \fi
  \ifdefined\nohyperref\else\ifdefined\hypersetup
    \hypersetup{pdfauthor=\icmlfullauthorlist}
  \fi\fi}

\def\toptitlebar{\hrule height1pt \vskip .25in} 
\def\bottomtitlebar{\vskip .22in \hrule height1pt \vskip .3in} 
\def\icmlauthor#1#2{%
  \ifdefined\isaccepted
    \par {\bf #1} \hfill {\sc #2}%
    \addtofullauthorlist{#1}
  \fi
}
\long\def\icmladdress#1{%
  \ifdefined\isaccepted
    \par\vskip 0.03in #1 \vskip 0.10in
  \fi
}

\def\icmlkeywords#1{%
  \ifdefined\nohyperref\else\ifdefined\hypersetup
    \hypersetup{pdfkeywords={#1}}
  \fi\fi
}

\setcitestyle{authoryear,round,citesep={;},aysep={,},yysep={;}}

\renewenvironment{abstract}
   {%
\@copyrightspace
\centerline{\large\bf Abstract}
    \vspace{-0.12in}\begin{quote}}
   {\par\end{quote}\vskip 0.12in}


\def\@startsection#1#2#3#4#5#6{\if@noskipsec \leavevmode \fi
   \par \@tempskipa #4\relax
   \@afterindenttrue
   \ifdim \@tempskipa <\z@ \@tempskipa -\@tempskipa \fi
   \if@nobreak \everypar{}\else
     \addpenalty{\@secpenalty}\addvspace{\@tempskipa}\fi \@ifstar
     {\@ssect{#3}{#4}{#5}{#6}}{\@dblarg{\@sict{#1}{#2}{#3}{#4}{#5}{#6}}}}

\def\@sict#1#2#3#4#5#6[#7]#8{\ifnum #2>\c@secnumdepth
     \def\@svsec{}\else 
     \refstepcounter{#1}\edef\@svsec{\csname the#1\endcsname}\fi
     \@tempskipa #5\relax
      \ifdim \@tempskipa>\z@
        \begingroup #6\relax
          \@hangfrom{\hskip #3\relax\@svsec.~}{\interlinepenalty \@M #8\par}
        \endgroup
       \csname #1mark\endcsname{#7}\addcontentsline
         {toc}{#1}{\ifnum #2>\c@secnumdepth \else
                      \protect\numberline{\csname the#1\endcsname}\fi
                    #7}\else
        \def\@svsechd{#6\hskip #3\@svsec #8\csname #1mark\endcsname
                      {#7}\addcontentsline
                           {toc}{#1}{\ifnum #2>\c@secnumdepth \else
                             \protect\numberline{\csname the#1\endcsname}\fi
                       #7}}\fi
     \@xsect{#5}}

\def\@sect#1#2#3#4#5#6[#7]#8{\ifnum #2>\c@secnumdepth
     \def\@svsec{}\else 
     \refstepcounter{#1}\edef\@svsec{\csname the#1\endcsname\hskip 0.4em }\fi
     \@tempskipa #5\relax
      \ifdim \@tempskipa>\z@ 
        \begingroup #6\relax
          \@hangfrom{\hskip #3\relax\@svsec}{\interlinepenalty \@M #8\par}
        \endgroup
       \csname #1mark\endcsname{#7}\addcontentsline
         {toc}{#1}{\ifnum #2>\c@secnumdepth \else
                      \protect\numberline{\csname the#1\endcsname}\fi
                    #7}\else
        \def\@svsechd{#6\hskip #3\@svsec #8\csname #1mark\endcsname
                      {#7}\addcontentsline
                           {toc}{#1}{\ifnum #2>\c@secnumdepth \else
                             \protect\numberline{\csname the#1\endcsname}\fi
                       #7}}\fi
     \@xsect{#5}}

\def\thesection {\arabic{section}}
\def\thesubsection {\thesection.\arabic{subsection}}
\def\section{\@startsection{section}{1}{\z@}{-0.12in}{0.02in}
             {\large\bf\raggedright}}
\def\subsection{\@startsection{subsection}{2}{\z@}{-0.10in}{0.01in}
                {\normalsize\bf\raggedright}}
\def\subsubsection{\@startsection{subsubsection}{3}{\z@}{-0.08in}{0.01in}
                {\normalsize\sc\raggedright}}
\def\paragraph{\@startsection{paragraph}{4}{\z@}{1.5ex plus
  0.5ex minus .2ex}{-1em}{\normalsize\bf}}
\def\subparagraph{\@startsection{subparagraph}{5}{\z@}{1.5ex plus
  0.5ex minus .2ex}{-1em}{\normalsize\bf}}
 
\footnotesep 6.65pt %
\skip\footins 9pt 
\def\footnoterule{\kern-3pt \hrule width 0.8in \kern 2.6pt } 
\setcounter{footnote}{0} 
 
\parindent 0pt 
\topsep 4pt plus 1pt minus 2pt 
\partopsep 1pt plus 0.5pt minus 0.5pt 
\itemsep 2pt plus 1pt minus 0.5pt 
\parsep 2pt plus 1pt minus 0.5pt 
\parskip 6pt
 
\leftmargin 2em \leftmargini\leftmargin \leftmarginii 2em 
\leftmarginiii 1.5em \leftmarginiv 1.0em \leftmarginv .5em  
\leftmarginvi .5em 
\labelwidth\leftmargini\advance\labelwidth-\labelsep \labelsep 5pt 
 
\def\@listi{\leftmargin\leftmargini} 
\def\@listii{\leftmargin\leftmarginii 
   \labelwidth\leftmarginii\advance\labelwidth-\labelsep 
   \topsep 2pt plus 1pt minus 0.5pt 
   \parsep 1pt plus 0.5pt minus 0.5pt 
   \itemsep \parsep} 
\def\@listiii{\leftmargin\leftmarginiii 
    \labelwidth\leftmarginiii\advance\labelwidth-\labelsep 
    \topsep 1pt plus 0.5pt minus 0.5pt  
    \parsep \z@ \partopsep 0.5pt plus 0pt minus 0.5pt 
    \itemsep \topsep} 
\def\@listiv{\leftmargin\leftmarginiv 
     \labelwidth\leftmarginiv\advance\labelwidth-\labelsep} 
\def\@listv{\leftmargin\leftmarginv 
     \labelwidth\leftmarginv\advance\labelwidth-\labelsep} 
\def\@listvi{\leftmargin\leftmarginvi 
     \labelwidth\leftmarginvi\advance\labelwidth-\labelsep} 
 
\abovedisplayskip 7pt plus2pt minus5pt%
\belowdisplayskip \abovedisplayskip 
\abovedisplayshortskip  0pt plus3pt%
\belowdisplayshortskip  4pt plus3pt minus3pt%
 
\def\@normalsize{\@setsize\normalsize{11pt}\xpt\@xpt} 
\def\small{\@setsize\small{10pt}\ixpt\@ixpt} 
\def\footnotesize{\@setsize\footnotesize{10pt}\ixpt\@ixpt} 
\def\scriptsize{\@setsize\scriptsize{8pt}\viipt\@viipt} 
\def\tiny{\@setsize\tiny{7pt}\vipt\@vipt} 
\def\large{\@setsize\large{14pt}\xiipt\@xiipt} 
\def\Large{\@setsize\Large{16pt}\xivpt\@xivpt} 
\def\LARGE{\@setsize\LARGE{20pt}\xviipt\@xviipt} 
\def\huge{\@setsize\huge{23pt}\xxpt\@xxpt} 
\def\Huge{\@setsize\Huge{28pt}\xxvpt\@xxvpt} 

\newsavebox\captionbox\newdimen\captionboxwid

\long\def\@makecaption#1#2{
 \vskip 10pt 
        \baselineskip 11pt
        \setbox\@tempboxa\hbox{#1. #2}
        \ifdim \wd\@tempboxa >\hsize
        \sbox{\captionbox}{\small\sl #1.~}
        \captionboxwid=\wd\captionbox
        \usebox\captionbox {\footnotesize #2}
        \else 
          \centerline{{\small\sl #1.} {\small #2}} 
        \fi}

\def\fnum@figure{Figure \thefigure}
\def\fnum@table{Table \thetable}

\def\abovestrut#1{\rule[0in]{0in}{#1}\ignorespaces}
\def\belowstrut#1{\rule[-#1]{0in}{#1}\ignorespaces}

\def\abovespace{\abovestrut{0.20in}}
\def\aroundspace{\abovestrut{0.20in}\belowstrut{0.10in}}
\def\belowspace{\belowstrut{0.10in}}

\def\texitem#1{\par\noindent\hangindent 12pt
               \hbox to 12pt {\hss #1 ~}\ignorespaces}
\def\icmlitem{\texitem{$\bullet$}}

\long\def\comment#1{}


\makeatletter
\newbox\icmlrulerbox
\newcount\icmlrulercount
\newdimen\icmlruleroffset
\newdimen\cv@lineheight
\newdimen\cv@boxheight
\newbox\cv@tmpbox
\newcount\cv@refno
\newcount\cv@tot
\newcount\cv@tmpc@ \newcount\cv@tmpc
\def\fillzeros[#1]#2{\cv@tmpc@=#2\relax\ifnum\cv@tmpc@<0\cv@tmpc@=-\cv@tmpc@\fi
\cv@tmpc=1 %
\loop\ifnum\cv@tmpc@<10 \else \divide\cv@tmpc@ by 10 \advance\cv@tmpc by 1 \fi
   \ifnum\cv@tmpc@=10\relax\cv@tmpc@=11\relax\fi \ifnum\cv@tmpc@>10 \repeat
\ifnum#2<0\advance\cv@tmpc1\relax-\fi
\loop\ifnum\cv@tmpc<#1\relax0\advance\cv@tmpc1\relax\fi \ifnum\cv@tmpc<#1 \repeat
\cv@tmpc@=#2\relax\ifnum\cv@tmpc@<0\cv@tmpc@=-\cv@tmpc@\fi \relax\the\cv@tmpc@}%
\def\makevruler[#1][#2][#3][#4][#5]{
	\begingroup\offinterlineskip
		\textheight=#5\vbadness=10000\vfuzz=120ex\overfullrule=0pt%
		\global\setbox\icmlrulerbox=\vbox to \textheight{%
			{
				\parskip=0pt\hfuzz=150em\cv@boxheight=\textheight
				\cv@lineheight=#1\global\icmlrulercount=#2%
				\cv@tot\cv@boxheight\divide\cv@tot\cv@lineheight\advance\cv@tot2%
				\cv@refno1\vskip-\cv@lineheight\vskip1ex%
				\loop\setbox\cv@tmpbox=\hbox to0cm{					 
					\hfil {\hfil\fillzeros[#4]\icmlrulercount}
				}%
				\ht\cv@tmpbox\cv@lineheight\dp\cv@tmpbox0pt\box\cv@tmpbox\break
				\advance\cv@refno1\global\advance\icmlrulercount#3\relax
				\ifnum\cv@refno<\cv@tot\repeat
			}
		}
	\endgroup
}%
\makeatother

\def\icmlruler#1{\makevruler[12pt][#1][1][3][\textheight]\usebox{\icmlrulerbox}}
\AddToShipoutPicture{%
\icmlruleroffset=\textheight
\advance\icmlruleroffset by 5.2pt 
  \color[rgb]{.7,.7,.7}
  \ifdefined\isaccepted \else
	  \AtTextUpperLeft{%
	    \put(\LenToUnit{-35pt},\LenToUnit{-\icmlruleroffset}){
	      \icmlruler{\icmlrulercount}}
	    \put(\LenToUnit{1.04\textwidth},\LenToUnit{-\icmlruleroffset}){
	      \icmlruler{\icmlrulercount}}
	  }
	 \fi
}
\endinput